\newcommand{\partition}{\Pi}
\newcommand{\dtv}[2]{d_{TV}(#1, #2)}
\newcommand{\Epsilon}{E}
\newcommand{\X}{X}
\newcommand{\x}{x}
\newcommand{\Y}{Y}
\newcommand{\y}{y}
\newcommand{\Q}{Q}
\newcommand{\q}{q}
\newcommand{\ab}{k}
\newcommand{\ns}{n}
\newcommand{\dist}{\alpha}
\newcommand{\p}{p}
\newcommand{\eps}{\varepsilon}
  \theoremstyle{definition}
  \newtheorem{definition}{Definition}
  \theoremstyle{plain}
  \newtheorem{theorem}{Theorem}
  \newtheorem{proposition}{Proposition}
  \newtheorem{lemma}{Lemma}
  \newtheorem{claim}{Claim}
  \theoremstyle{remark}
\def \Paren#1{{\left({#1}\right)}}
\newcommand{\ignore}[1]{}
\newcommand{\EE}{\mathbb{E}}
\newcommand{\RR}{\mathbb{R}}
\newcommand{\expectation}[1]{\EE\left[#1\right]}
\newcommand{\variance}[1]{Var\left(#1\right)}
\def \cA     {{\cal A}}
\def \cP     {{\cal P}}
\def \cQ     {{\cal Q}}
\def \cR     {{\cal R}}
\def \cW     {{\cal W}}
\def \cX     {{\cal X}}
\def \cY     {{\cal Y}}
\def \cZ     {{\cal Z}}
\def \upto  {{,}\ldots{,}}
\def \ceil#1{{\lceil{#1}\rceil}}
\def \Paren#1{{\left({#1}\right)}}
\newcommand{\probof}[1]{\Pr\Paren{#1}}
\def\ignore#1{}
\newcommand{\bi}{\begin{itemize}}
\newcommand{\ei}{\end{itemize}}
\def\orpro{\mathop{\mathchoice
   {\vee\kern-.49em\raise.7ex\hbox{$\cdot$}\kern.4em}
   {\vee\kern-.45em\raise.63ex\hbox{$\cdot$}\kern.2em}
   {\vee\kern-.4em\raise.3ex\hbox{$\cdot$}\kern.1em}
   {\vee\kern-.35em\raise2.2ex\hbox{$\cdot$}\kern.1em}}\limits}
\def\andpro{\mathop{\mathchoice
 {\wedge\kern-.46em\lower.69ex\hbox{$\cdot$}\kern.3em}
 {\wedge\kern-.46em\lower.58ex\hbox{$\cdot$}\kern.25em}
 {\wedge\kern-.38em\lower.5ex\hbox{$\cdot$}\kern.1em}
 {\wedge\kern-.3em\lower.5ex\hbox{$\cdot$}\kern.1em}}\limits}
\def\simge{\mathrel{%
   \rlap{\raise 0.511ex \hbox{$>$}}{\lower 0.511ex \hbox{$\sim$}}}}
\def\simle{\mathrel{
   \rlap{\raise 0.511ex \hbox{$<$}}{\lower 0.511ex \hbox{$\sim$}}}}
\def\withcolors{0}
\def\withnotes{1}
\newcommand{\new}[1]{{\color{red} {#1}}} 
\newcommand{\new}[1]{{{#1}}}
\title{Context-Aware Local Differential Privacy}
\author[1]{Jayadev Acharya}
\author[2]{Keith Bonawitz}
\author[2]{Peter Kairouz}
\author[2]{Daniel Ramage}
\author[1]{Ziteng Sun\thanks{JA and ZS are supported by NSF-CCF-1846300 (CAREER), NSF-CRII-1657471 and a Google Faculty Research Award. Work partially done while ZS was visiting Google.}}
\affil[1]{ECE, Cornell University. \{acharya, zs335\}@cornell.edu}
\affil[2]{Google. \{bonawitz, kairouz, dramage\}@google.com }
\date{}
\begin{document}
\sloppy


\maketitle

\begin{abstract}
Local differential privacy (LDP) is a strong notion of privacy for individual users that often comes at the expense of a significant drop in utility. The classical definition of LDP assumes that all elements in the data domain are equally sensitive. However, in many applications, some  symbols are more sensitive than others. This work proposes a context-aware framework of local differential privacy that allows a privacy designer to incorporate the application's context into the privacy definition. For binary data domains, we provide a universally optimal privatization scheme and highlight its connections to Warner's randomized response (RR) and Mangat's improved response. Motivated by geolocation and web search applications, for $k$-ary data domains, we consider two special cases of context-aware LDP: block-structured LDP and high-low LDP. We study discrete distribution estimation and provide communication-efficient, sample-optimal schemes and information theoretic lower bounds for both models. We show that using contextual information can require fewer samples than classical LDP to achieve the same accuracy.

\end{abstract}
	
\section{Introduction}
Differential privacy (DP) is a rigorous notion of privacy that enforces a worst-case bound on the privacy loss due to release of query results ~\cite{Dwork06}. Its local version, local differential privacy (LDP) (Definition~\ref{def:local_dp}), provides context-free privacy guarantees even in the absence of a trusted data collector \cite{warner1965randomized, BeimelNO08, KasiviswanathanLNRS11}. Under LDP, all pairs of elements in a data domain are assumed to be equally sensitive, leading to harsh privacy-utility trade-offs in many learning applications. In many real-life settings, however, some elements are more sensitive than others. For example, in URL applications, users may want to differentiate sensitive URLs from non-sensitive ones, and in geo-location applications, users may want to hide their precise location within a city, but not the city itself. 

This work introduces a unifying context-aware notion of LDP where different pairs of domain elements can have ``arbitrary'' sensitivity levels. For binary domains, we provide a universal optimality result and highlight interesting connections to Warner's response and Mangat's improved response. For $k$-ary domains, we look at two canonical examples of context-aware LDP: block-structured LDP and high-low LDP. For block-structured LDP, the domain is partitioned into $m$ blocks and the goal is to hide the identity of elements within the same block but not the block identity. This is motivated by geo-location applications where users can be in different cities, and it is not essential to hide the city of a person but rather where exactly within that city a person is (i.e., which bars, restaurants, or businesses they visit). In other words, in this context, users would like to hide which element in a given block of the domain their data belongs to -- and not necessarily which block their data is in, which may be known from side information or application context. For high-low LDP, we assume there is a set of sensitive elements and we only require the information about possessing sensitive elements to be protected. This can be applied in web browsing services, where there are a large number of URLs and not all of them contain sensitive information.
\subsection{Related work}

Recent works consider relaxations of DP~\cite{chatzikokolakis2013broadening, doudalis2017one, borgs2018revealing, asi2019element, feyisetan2019leveraging, dong2019gaussian} and LDP~\cite{chatzikokolakis2017efficient, Alvim2018InvitedPL,murakami2018restricted, kawamoto2019local, geumlek2019profile} that incorporate the context and structure of the underlying data domain. Indeed, \cite{doudalis2017one, murakami2018restricted} investigate settings where a small fraction of the domain elements are secrets and require that the adversary's knowledge about whether or not a user holds a secret cannot increase much upon the release of data.~\cite{chatzikokolakis2013broadening, borgs2018revealing, chatzikokolakis2017efficient, Alvim2018InvitedPL, xiang2019linear} model the data domain as a metric space and scale the privacy parameter between pairs of elements by their distance.~\cite{ScheinWSZW19} considers categorical high dimensional count models and define privacy measures that help privatize counts that are close in $\ell_1$ distance. 

To investigate the utility gains under this model of privacy, we consider the canonical task of distribution estimation:
 Estimate an unknown discrete distribution given (privatized) samples from it. The trade-off between utility and privacy in distribution estimation has received recent attention, and optimal rates have been established~\cite{duchi2013local, DiakonikolasHS15, kairouz2016discrete, WangHWNXYLQ16, YeB17, AcharyaSZ19}. These recent works show that the sample complexity for $k$-ary distribution estimation up to accuracy $\alpha$ in total variation distance increases from $\Theta \Paren{\frac{k}{\alpha^2}}$ (without privacy constraints) to $\Theta\Paren{\frac{k^2}{\alpha^2 \eps^2}}$ (for $\eps=O(1)$) when we impose $\eps$-LDP constraints. For both the block-structured and high-low models of LDP, we will characterize the precise sample complexity by providing privatization and estimation schemes that are both computation- and communication-efficient. 

Another line of work considers a slightly different problem of heavy hitter estimation where there is no distributional assumption under the data samples that the users have and the main focus is on reducing the computational complexity and communication requirements~\cite{bassily2015local, BassilyNST17, hsu2012distributed, wang2017locally, BunNS18, zhu2019federated, AcharyaS19}.

\subsection{Our contributions}
Motivated by the limitations of LDP, and practical applications where not all domain elements are equally sensitive, we propose a general notion of \emph{context-aware} local differential privacy (see Definition~\ref{def:general_local_dp}) and prove that it satisfies several important properties, such as preservation under post-processing, adaptive composition, and robustness to auxiliary information. 

When the underlying data domain is binary, we provide a complete characterization of a universally optimal scheme, which interpolates between Warner's randomized response and Mangat's improved response, given in Theorem~\ref{thm:optimal_binary}.

We then consider general data domains and investigate two practically relevant models of context-aware LDP. The first is the \emph{block-structured} model and is motivated by applications in geo-location and census data collection. Under this privacy model, we assume that the underlying data domain is divided into a number of blocks, and the elements within a block are sensitive. For example, when the underlying domain is a set of geographical locations, each block can correspond to the locations within a city. In this case, we would like to privatize the precise value within a particular block, but the privacy of the block ID is not of great concern (Definition~\ref{def:bsldp}). In Theorem~\ref{thm:bs_complexity} we characterize the sample complexity of estimating $k$-ary discrete distributions under this privacy model. We also propose a privatization scheme based on the recently proposed Hadamard Response (HR), which is both computation- and communication-efficient. We then prove the optimality of these bounds by proving a matching information-theoretic lower bound. This is achieved by casting the problem as an information constrained setting and invoking the lower bounds from~\cite{pmlr-v99-acharya19a, AcharyaCFT19}.  \textcolor{black}{We show that when all the blocks have roughly the same number of symbols, the sample complexity can be reduced by a factor of the number of blocks compared to that under the classic LDP.} See Theorem \ref{thm:bs_complexity} for a formal statement of this result.

The second model we consider is the \emph{high-low} model, where there are a few domain elements that are sensitive while the others are not. A form of this high-low notion of privacy was proposed in~\cite{murakami2018restricted}, which also considered distribution estimation and proposed an algorithm based on RAPPOR~\cite{erlingsson2014rappor}. We propose a new privatization scheme based on HR that has the same sample complexity as~\cite{murakami2018restricted} with a much smaller communication budget. We also prove a matching information theoretic lower bound showing the optimality of these schemes~\cite{murakami2018restricted}. \textcolor{black}{In this case, the sample complexity only depends quadratically in the number of sensitive elements and linearly in the domain size compared to the quadratic dependence on the domain size under the classic LDP.} See Theorem \ref{thm:hl_complexity} for a formal statement of this result.

As a consequence of these results, we observe that the sample complexity of distribution estimation can be significantly less than that in the classical LDP. Thus contextual privacy should be viewed as one possible method to improve the trade-off between the utility and privacy in LDP when different domain elements have different levels of privacy.

To validate our worst-case (minimax) analyses, we conduct experiments on synthetic data sampled from uniform, geometric, and power law distributions in addition to experiments on over 3.6 million check-ins to 43,750 locations from the Gowalla dataset. Our experiments confirm that context-aware LDP achieves a far better accuracy under the same number of samples.

\subsection{Organization}
In Section~\ref{sec:preliminaries} we define LDP and the problem of distribution estimation. In Section~\ref{sec:context} we define context-aware LDP and provide an operational definition along with some of its specializations. In Section~\ref{sec:binary} we provide the optimal privatization scheme for binary domains. In Section~\ref{sec:hilo} and Section~\ref{sec:bsldp} we derive the optimal sample complexity in the high-low model, and block-structured model respectively. Experimental results are presented in Section~\ref{sec:experiments}. We conclude with a few interesting extensions in Section \ref{sec:conc}.

\vspace{-6pt}
\section{Preliminaries}
\label{sec:preliminaries}
Let $\cX$ be the $k$-ary underlying data domain, wlog let $\cX=[\ab]:=\{1,\ldots,\ab\}$. 
There are $n$ users, and user $i$ has a (potentially) sensitive data point $X_i\in\cX$. 
A privatization scheme is a mechanism to add noise to mask the true $X_i$'s, and can be represented by a conditional distribution
$\Q\footnote{\new{In the remaining parts of this paper, we use terms privatization scheme, conditional distribution, and channel to refer to this $Q$ matrix interchangeably.}}:\cX\to\cY$, where $\cY$ denotes the output domain of the privatization scheme. For $y\in\cY$, and $x\in\cX$, $\Q\left(\y\mid\x\right)$ is the probability that the privatization scheme outputs $y$, upon observing $x$. If we let $X$ and $Y$ denote the input and output of a privatization scheme, then $\Q\left(\y\mid\x\right) = \probof{\Y =\y\mid\X = \x}$. For any set $S \subset \cY$, we have $Q(S \mid x) := \probof{Y \in S \mid X = x}$. 

\begin{definition}[Local Differential Privacy.~\cite{warner1965randomized, KasiviswanathanLNRS11}]
	\label{def:local_dp} Let $\eps>0$. A conditional distribution $\Q$
	is  $\eps$-locally differentially private ($\eps$-LDP) if for all $\x$,
	$\x^\prime\in\cX$ and all $S \subset \cY$, 
	\begin{equation}
	\Q\left(S\mid\x\right) \leq  e^{\varepsilon} \Q\left(S\mid\x^\prime\right).
	\label{eqn:ldp}
	\end{equation}
	Let $\cQ_{\eps}$ be the set of all  $\eps$-LDP conditional distributions with input $[\ab]$.
\end{definition}

\noindent \textbf{Distribution estimation.} Let
\[
\Delta_\ab := \{(\p_1,\ldots,\p_k):  \forall  x \in [k], \p_x \ge 0;  \sum \p_x=1 \}
\] be all discrete distributions over $[\ab]$. We assume that the user's data $X^\ns:=X_1, \ldots, X_\ns$ are independent draws from an unknown $\p \in \Delta_\ab$. User $i$ passes $X_i$ through the privatization channel $Q$ and sends the output $Y_i$ to a central server. Upon observing the messages $Y^\ns:=Y_1,\ldots, Y_\ns$, the server then outputs $\hat{\p}$ as an estimate of $\p$. Our goal is to select $Q$ from a set of \emph{allowed channels} $\cQ$ and to design an estimation scheme $\hat{\p}:\cY^\ns\to\Delta_k$ that achieves the following min-max risk
\begin{equation} \label{eqn:risk_def}
	r(k,n,d,\cQ) = \min_{Q \in \cQ} \left\{ \min_{\hat{\p}} \max_{\p \in \triangle_k} \expectation{d\Paren{\p, \hat{\p}}} \right\},
\end{equation}
where $d\Paren{\cdot,\cdot}$ is a measure of distance between distributions. In this paper we consider the total variation distance, $\dtv{p}{q} := \frac12\sum_{i = 1}^k |\p_i - \q_i|$. For a parameter $\alpha>0$, the sample complexity of distribution estimation to accuracy $\alpha$ is the smallest number of users for the min-max risk to be smaller than $\alpha$,
\[
	n(k,\alpha,\cQ) := \arg \min_n \{ r(k,n,d_{\rm TV},\cQ)< \alpha \}.
\]
When $\cQ$ is $\cQ_{\eps}$, the channels satisfying $\eps$- LDP constraints with input domain $[k]$, it is now well established that for $\eps=O(1)$~\cite{duchi2013local, kairouz2016discrete, YeB17, WangHWNXYLQ16, pmlr-v99-acharya19a, AcharyaSZ19},
\begin{align}
	n(k,\alpha,\cQ_{\eps}) = \Theta \Paren{\frac{k^2}{\alpha^2\eps^2}}.\label{eqn:optimal-ldp}
\end{align}
\textcolor{black}{
\noindent \textbf{Hadamard matrix (Sylvester's construction).} Let $H_1 = [1]$, then Sylvester's construction of Hadamard matrices is a sequence of square matrices of size $2^i \times 2^i$ recursively defined as
\[
    H_{2m} = 
    \begin{bmatrix}
    H_m & H_m \\
    H_m & -H_m
    \end{bmatrix}
\]
Letting $S_i = \{y \mid y \in [m], H(i+1,y) = +1\}$ be the column indices of $+1$'s in the $(i+1)$th row of $H_m$, we have
\begin{itemize}
    \item $\forall i \in [m-1], |S_i| = \frac{m}{2}$.
    \item $\forall i \neq j \in [m-1], |S_i \cap S_j| = \frac{m}{4}$
\end{itemize}
}

\vspace{-6pt}
\section{Context-Aware LDP}\label{sec:context}
In local differential privacy, all elements in the data domain are assumed to be equally sensitive, and the same privacy constraint is enforced on all pairs of them (see~\eqref{eqn:ldp}).
However, in many settings some domain elements might be more sensitive than others.
To capture this, we present a context-aware notion of privacy. Let $\Epsilon\in \RR_{\ge0}^{\ab\times\ab}$ be a matrix of non-negative entries, where for $x, x^\prime\in[k]$, $\eps_{\x,\x^\prime}$ is the $(x,x^\prime)$th entry of $E$. 
\begin{definition}[Context-Aware LDP]
	\label{def:general_local_dp} A conditional distribution $\Q$
	is $\Epsilon$-LDP if for all $\x$,
	$\x^\prime\in\cX$ and $S \subseteq \cY$,
	\begin{equation}
	\label{eq:general_local_dp}
	    \Q\left(S\mid\x\right) \leq  e^{\varepsilon_{\x,\x^\prime}} \Q\left(S\mid\x^\prime\right).
	\end{equation}
	\vspace{-15pt}
\end{definition}
\textcolor{black}{This definition allows us to have a different sensitivity level between each pair of elements to incorporate context information.} For a \emph{privacy matrix} $E$, the set of all $\Epsilon$-LDP mechanisms is denoted by ${\cQ}_{\Epsilon}$. When all the entries of $\Epsilon$ are $\eps$, we obtain the classical LDP. 

\medskip
\new{
\noindent\textbf{Symmetric $E$ matrices.}
When the $E$ matrix is symmetric, context-aware LDP introduces a similar structure as Metric-based LDP~\cite{Alvim2018InvitedPL}, which considers $\cX$ to be a metric space endowed with a distance $d_\cX$. In this case, consider
\[
	\eps_{\x, \x^\prime} = \eps_{\x^\prime, \x} = d_{\cX} (\x, \x^\prime).
\]
It is required that it is harder to distinguish close-by symbols compared to symbols that are relatively far from each other. As a special case, we introduce (later in this section) the notion of Block-Structured LDP as an example which only requires the elements to be indistinguishable if they are in the same block.
}

\medskip
\new{
\noindent\textbf{Asymmetric $E$ matrices.}
An advantage of our framework is that it allows the matrix $E$ to be asymmetric.  Consider a binary setting where we ask a user whether or not they have used drugs before. Here, people whose answer is \textbf{no} can occasionally say \textbf{yes} to protect those whose answer is \textbf{yes}. Thus, when the data collector sees a \textbf{yes}, they are lost as to whether it came from a true \textbf{yes} answer or a \textbf{no} that was flipped to a \textbf{yes}. Further, there is no need to randomize \textbf{yes} answers because it is okay for the data collector to know who did not do drugs. This is captured in our framework by allowing $Q(\text{\textbf{no}} \mid \text{\textbf{no}})/Q(\text{\textbf{no}} \mid \text{\textbf{yes}})$ to be arbitrarily large and placing an upper bound on $Q(\text{\textbf{yes}} \mid \text{\textbf{yes}})/Q(\text{\textbf{yes}} \mid \text{\textbf{no}})$. A well-known scheme that satisfies this requirement is Mangat’s improved randomized response~\cite{mangat1994improved}. See Section 4 for more details. This intuition is generalized to $k$-ary alphabet in the High-Low LDP model introduced later in this section.
}

We provide an operational (hypothesis testing) interpretation for context-aware LDP in Section~\ref{sec:operational}. We show that context-aware LDP is robust to post-processing, robust to auxiliary information, and is adaptively composable in Section~\ref{sec:properties}.

For binary domains, namely when $k=2$, we give a single optimal privatization scheme for all $\Epsilon$ matrices in Section~\ref{sec:binary}. For general $k$, it is unclear whether there is a simple characterization of the optimal schemes for all $\Epsilon$ matrices. First note that if $\eps^* = \min_{i,j} \eps_{i,j}$ is the smallest entry of $\Epsilon$, then any $\eps^*$-LDP algorithm is also $\Epsilon$-LDP. However, this is not helpful since it does not help us capture the context of the application and consequently get rid of the stringent requirements of standard LDP. Motivated by applications in geo-location and web search, we consider structured $E$ matrices that are both practically motivating, and are paramterized by a few parameters (or a single parameter) so that they are easier to analyze  and implement.

\medskip
\noindent\textbf{High-Low LDP (HLLDP).}
The high-low model captures applications where there are certain domain elements that are private, and the remaining elements are non-private. We only want to protect the privacy of private elements. This is formalized below.
\begin{definition} \label{def:high-low}
Let $A = \{\x_1, \cdots, \x_s\}\subset\cX$ denote the set of sensitive domain elements, and all symbols in $B := \cX\setminus A$ are non-sensitive. A privatization scheme is said to be $(A, \eps)$-HLLDP if $\forall S \subset \cY$, and $x\in A, x' \in \cX$,
\begin{align}
	Q(S\mid x)\le e^{\eps}Q(S\mid x'), \label{eqn:hilo}
\end{align}
\new{which corresponds to the following $E$ matrix:
\begin{equation*}
    \eps_{x, x'} = 
    \begin{cases}
        \eps, & x \in A, \\
        \infty, & x \in B.
    \end{cases}
\end{equation*}}
\vspace{-15pt}
\end{definition}
This implies that when the input symbol is in $A$, the output distribution cannot be multiplicatively larger than the output distribution for any other symbol, but there is no such restriction for symbols in $B$.
HLLDP was also defined in~\cite{murakami2018restricted}. We solve the problem of minimax distribution estimation under this privacy model in Section~\ref{sec:hilo}.

\noindent\textbf{Block-Structured LDP (BSLDP).}
In applications such as geo-location, it is important to preserve the privacy of symbols that are close to each other. We consider this model where the data domain is divided into various blocks (e.g., the cities), and we would like the symbols within a block (e.g., the various locations within a city) to be hard to distinguish. This is formalized below.

\begin{definition}
	Suppose there is a partition of $\cX$, which is $\partition = \{ \cX_1, \cX_2, ..., \cX_m \}$. \new{With a slight abuse of notation, we define $\forall x \in \cX_i, \partition(x) := i$.} Then $\forall \eps > 0$, a privatization scheme is said to be $(\partition, \eps)$ - BSLDP if it satisfies $\forall x, x' \in \cX_i$ such that $\partition(x) = \partition(x')$, and any $S \subset \cY$, we have
	\begin{equation}
		Q(S\mid x)\le e^{\eps}Q(S\mid x'),
	\end{equation}
	\new{which corresponds to the following $E$ matrix:
\begin{equation*}
    \eps_{x, x'} = 
    \begin{cases}
        \eps, & \partition(x) = \partition(x'), \\
        \infty, & \partition(x) \neq \partition(x').
    \end{cases}
\end{equation*}}
\vspace{-15pt}
	\label{def:bsldp}
\end{definition}
This definition relaxes the local version of differential privacy in the following way. Given a partition of the input set $\partition = \{ \cX_1, \cX_2, ..., \cX_m \}$, it requires different levels of indistinguishablitity for element pairs in the same block and those in different blocks. 
We solve the problem of minimax distribution estimation under this model in Section~\ref{sec:bsldp}.

\subsection{Operational interpretation of context-aware LDP}
\label{sec:operational}
Recall that the entries of $\Epsilon$ denote the amount of privacy across the corresponding row and column symbols. We provide an operational interpretation of $E$-LDP by considering a natural hypothesis testing problem. Suppose we are promised that the input is in $\{x, x^\prime\}$ for some symbols $x,x^\prime\in[k]$, and an $E$-LDP scheme outputs a symbol $\Y\in\cY$. Given $Y$, the goal is to test whether the input is $x$ or $x^\prime$. 
\begin{theorem}[Operational Interpretation of Context-Aware LDP]
	\label{thm:operational_def_gdp} A conditional distribution $\Q$
	is $\Epsilon$-locally differentially private if and only if for all $x, x^\prime \in \cX$ and all  decision rules $\hat{\X}: \cY \rightarrow \{x,x^\prime\}$,
	\begin{align}
	P_{\rm FA}(x,x^\prime)  + e^{\eps_{x^\prime,x}}P_{\rm MD}(x,x^\prime)  \geq  1, \label{eqn:famd} \\
	e^{\eps_{x,x^\prime}}P_{\rm FA}(x,x^\prime)  + P_{\rm MD}(x,x^\prime)  \geq  1, \label{eqn:mdfa} 
	\end{align}
	where $P_{\rm FA}(x, x^\prime) = \Pr({\hat{\X} = x\mid \X = x^\prime})$ \new{is the false alarm rate} and $ P_{\rm MD}(x, x^\prime) = \Pr({\hat{\X} = x^\prime\mid \X = x})$ \new{is the miss detection rate}. 
\end{theorem}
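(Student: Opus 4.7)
The plan is to establish a bijection between binary decision rules and measurable subsets of $\cY$, then observe that each of the two hypothesis-testing inequalities is a verbatim algebraic rewriting of the $E$-LDP inequality applied to a well-chosen set. Given any decision rule $\hat{X}: \cY \to \{x, x'\}$, let $S := \{y \in \cY : \hat{X}(y) = x\}$, so that $S^c = \{y : \hat{X}(y) = x'\}$. Then by definition
\[
    P_{\rm FA}(x, x') = Q(S \mid x'), \qquad P_{\rm MD}(x, x') = Q(S^c \mid x) = 1 - Q(S \mid x).
\]
This correspondence is clearly a bijection between binary decision rules and subsets of $\cY$.

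For the forward direction ($\Epsilon$-LDP $\Rightarrow$ \eqref{eqn:famd} and \eqref{eqn:mdfa}), I would first derive \eqref{eqn:mdfa} by applying \eqref{eq:general_local_dp} to the set $S$ with the pair $(x, x')$: the inequality $Q(S \mid x) \leq e^{\eps_{x, x'}} Q(S \mid x')$ becomes $1 - P_{\rm MD}(x, x') \leq e^{\eps_{x, x'}} P_{\rm FA}(x, x')$, which rearranges to \eqref{eqn:mdfa}. To obtain \eqref{eqn:famd}, I would instead apply \eqref{eq:general_local_dp} to $S^c$ with the pair $(x', x)$, yielding $Q(S^c \mid x') \leq e^{\eps_{x', x}} Q(S^c \mid x)$, i.e. $1 - P_{\rm FA}(x, x') \leq e^{\eps_{x', x}} P_{\rm MD}(x, x')$, which rearranges to \eqref{eqn:famd}.

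For the converse ($\eqref{eqn:famd}$ and $\eqref{eqn:mdfa}$ for all decision rules $\Rightarrow$ $\Epsilon$-LDP), I would fix arbitrary $x, x' \in \cX$ and an arbitrary measurable $T \subseteq \cY$, and consider the decision rule $\hat{X}(y) = x$ iff $y \in T$. Substituting $P_{\rm FA}(x, x') = Q(T \mid x')$ and $P_{\rm MD}(x, x') = 1 - Q(T \mid x)$ into \eqref{eqn:mdfa} yields exactly $Q(T \mid x) \leq e^{\eps_{x, x'}} Q(T \mid x')$; running the same substitution through \eqref{eqn:famd} recovers the dual inequality with $(x', x)$ (equivalent to the original with $T$ replaced by $T^c$). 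Since $T$ and $(x, x')$ were arbitrary, this establishes \eqref{eq:general_local_dp} in full.

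There is no genuine obstacle here—the theorem is essentially a restatement. The only subtlety worth flagging is the choice of which set ($S$ versus $S^c$) and which ordered pair ($(x, x')$ versus $(x', x)$) to plug into \eqref{eq:general_local_dp} to recover each inequality in its intended form; taking complements is what converts a $Q(\cdot \mid x)$-on-the-left statement about $S$ into one that mixes $P_{\rm FA}$ and $P_{\rm MD}$ with the required coefficients, and it is the asymmetry of $\Epsilon$ (the fact that $\eps_{x, x'}$ and $\eps_{x', x}$ can differ) that forces us to keep careful track of which direction of the LDP inequality is being invoked.
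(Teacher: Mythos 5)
Your proposal is correct and follows essentially the same route as the paper: the converse is identical (take the rule $\hat{X}=x$ iff $Y\in S$ and rearrange), and your forward direction simply applies the $E$-LDP inequality directly to the acceptance set $S$ and its complement, which is an equivalent (and slightly more streamlined) version of the paper's expansion over $y\in\cY$ using $Q(y\mid x)\ge e^{-\eps_{x',x}}Q(y\mid x')$. The only cosmetic difference is that the paper's summation argument also covers randomized decision rules, while your set-based bijection handles exactly the deterministic rules quantified in the theorem statement, which suffices.
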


The proof is provided in Section~\ref{sec:operational_def_gdp}. Consider a test for distinguishing $x$ and $x^\prime$ given above in Theorem~\ref{thm:operational_def_gdp}. Figure~\ref{fig:err_region} shows the effective error regions for any estimator $\hat{\X}$ under the privacy constraints $\eps_{x,x^\prime}$ and $\eps_{x^\prime,x}$. We can see that unlike the symmetric region under LDP, we are pushing the miss detection rate to be higher when $\eps_{x, x^\prime} < \eps_{x^\prime,x}$. This shows that symbol $x$ is more private than symbol $x^\prime$, namely we want to protect the identity of symbol being $x$ more than we want to protect the identity being $x^\prime$.

\begin{centering}

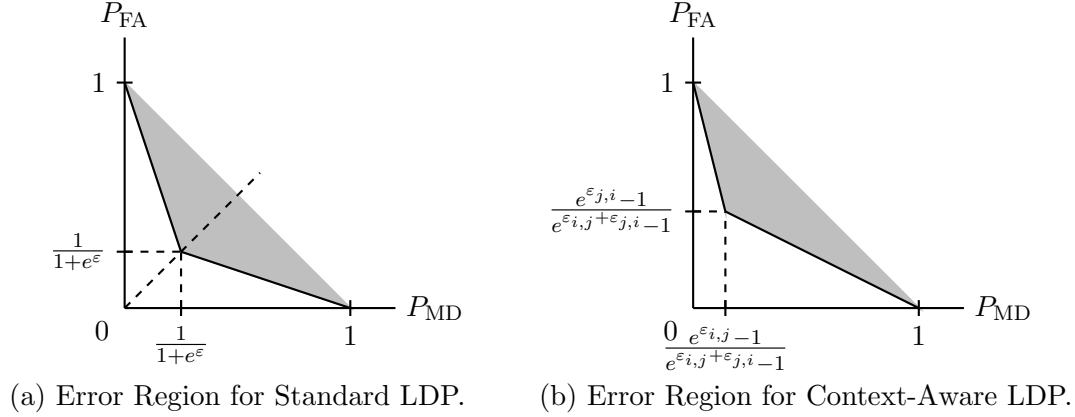
\begin{figure*}
\begin{minipage}[t]{.45\textwidth}
\centering
\begin{tikzpicture}[thick, scale = 3.0]
\draw[draw=gray!50!white,fill=gray!50!white] 
    plot[smooth,samples=100,domain=0:0.25] (\x,{1 - \x}) -- 
    plot[smooth,samples=100,domain=0.25:0] (\x,{1-3*\x});

\draw[draw=gray!50!white,fill=gray!50!white] 
    plot[smooth,samples=100,domain=0.25:1] (\x,{1 - \x}) -- 
    plot[smooth,samples=100,domain=1:0.25] (\x,{(1-\x)/3});

\draw[domain=0:0.25] plot (\x,{1 - 3*\x});

\draw[domain=0.25:1] plot (\x,{(1 - \x)/3});

\draw (0,0)--(1.2,0) node[right]{$P_{\rm MD}$};
\draw (0,0)--(0,1.2) node[above]{$P_{\rm FA}$};
\foreach \x in {1}
    \draw (\x,1pt)--(\x,-1pt) node[below] {$\x$};
\foreach \y/\ytext in {1}
    \draw (1pt,\y)--(-1pt,\y) node[left] {$\y$};  
    
\draw [dashed] (0,0)--(0.6,0.6);
\draw (1pt,0.25)--(-1pt,0.25) node[left] {$\frac{1}{1+e^\eps}$}; 
\draw (0.25,1pt)--(0.25,-1pt) node[below] {$\frac{1}{1+e^\eps}$};
\draw [dashed] (0.25,0) -- (0.25, 0.25);
\draw [dashed] (0,0.25) -- (0.25, 0.25);
\node at (-0.1, -0.1) {$0$};
\node at (0.5, -0.4) {(a) Error Region for Standard LDP.};
\end{tikzpicture}
\end{minipage}
\begin{minipage}[t]{.45\textwidth}
\centering
\begin{tikzpicture}[thick, scale = 3.0]
\draw[draw=gray!50!white,fill=gray!50!white] 
    plot[smooth,samples=100,domain=0:1/7] (\x,{1 -\x}) -- 
    plot[smooth,samples=100,domain=1/7:0] (\x,{1-4*\x});

\draw[draw=gray!50!white,fill=gray!50!white] 
    plot[smooth,samples=100,domain=1/7:1] (\x,{1 - \x}) -- 
    plot[smooth,samples=100,domain=1:1/7] (\x,{(1-\x)/2});

\draw[domain=0:1/7] plot (\x,{1 - 4*\x});

\draw[domain=1/7:1] plot (\x,{(1 - \x)/2});

\draw (0,0)--(1.2,0) node[right]{$P_{\rm MD}$};
\draw (0,0)--(0,1.2) node[above]{$P_{\rm FA}$};
\foreach \x in {1}
    \draw (\x,1pt)--(\x,-1pt) node[below] {$\x$};
\foreach \y/\ytext in {1}
    \draw (1pt,\y)--(-1pt,\y) node[left] {$\y$};

\draw (1pt,3/7)--(-1pt,3/7) node[left] {$\frac{e^{\eps_{j,i}} -1}{e^{\eps_{i,j}+\eps_{j,i}}-1}$}; 
\draw (1/7,1pt)--(1/7,-1pt) node[below] {$\frac{e^{\eps_{i,j}} -1}{e^{\eps_{i,j}+\eps_{j,i}}-1}$};
\draw [dashed] (1/7,0) -- (1/7, 3/7);
\draw [dashed] (0,3/7) -- (1/7, 3/7);
\node at (-0.1, -0.1) {$0$};
\node at (0.5, -0.4) {(b) Error Region for Context-Aware LDP.};
\end{tikzpicture}
\end{minipage}
\caption{Error region for hypothesis testing between $i$ and $j$ under DP constraints}
\label{fig:err_region}
\end{figure*}

\end{centering}

\subsection{Properties of context-aware LDP} 
\label{sec:properties}

Context-aware LDP also satisfies several important properties held by classical LDP, including post-processing, adaptive composition and robustness to auxiliary information. We provide their proofs in Section~\ref{sec:proof_properties}.

\begin{proposition}[Preservation under post-processing] \label{prop:post-processing}
    Let $\cA_1: \cX \rightarrow \cY_1$ be an $E$-LDP scheme and $\cA_2: \cY_1 \rightarrow \cY_2$ is any algorithm that processes the output of $\cA_1$, then the scheme $\cA = \cA_2 \circ \cA_1$ is also $E$-LDP.
\end{proposition}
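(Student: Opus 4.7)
The plan is to verify the set inequality of Definition~\ref{def:general_local_dp} directly for the composed channel. Let $Q_1(\cdot \mid x)$ denote the conditional distribution of $\cA_1(x)$ on $\cY_1$ and $Q_2(\cdot \mid y)$ the conditional distribution of $\cA_2(y)$ on $\cY_2$, and let $Q(\cdot \mid x)$ denote the conditional distribution of $\cA(x) = \cA_2(\cA_1(x))$ on $\cY_2$. Fix any $x, x' \in \cX$ and any measurable $S \subseteq \cY_2$. First I would express the composed channel as a mixture over the intermediate output:
\[
    Q(S \mid x) \;=\; \int_{\cY_1} Q_2(S \mid y) \, Q_1(dy \mid x),
\]
and define the auxiliary function $f(y) := Q_2(S \mid y)$, which is measurable and takes values in $[0,1]$.

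The next step is to reduce the integral-type inequality to the set-type inequality supplied by the $E$-LDP hypothesis on $\cA_1$. For this I would use the layer-cake identity $f(y) = \int_0^1 \mathbf{1}\{f(y) > t\} \, dt$ and Fubini's theorem to obtain
\[
    Q(S \mid x) \;=\; \int_0^1 Q_1(B_t \mid x) \, dt, \qquad B_t := \{y \in \cY_1 : f(y) > t\}.
\]
Since each $B_t$ is a measurable subset of $\cY_1$, the $E$-LDP property of $\cA_1$ applied to $B_t$ yields $Q_1(B_t \mid x) \leq e^{\eps_{x,x'}} Q_1(B_t \mid x')$ pointwise in $t$. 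Integrating over $t \in [0,1]$ gives
\[
    Q(S \mid x) \;\leq\; e^{\eps_{x,x'}} \int_0^1 Q_1(B_t \mid x') \, dt \;=\; e^{\eps_{x,x'}} \, Q(S \mid x'),
\]
which is exactly the condition~\eqref{eq:general_local_dp} for the composed channel $Q$, and as $x, x', S$ were arbitrary this proves $\cA$ is $E$-LDP.

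There is no real obstacle here; the argument is essentially a bookkeeping exercise. The only subtlety is that the privacy hypothesis is stated at the level of sets rather than pointwise densities, so one cannot naively move the inequality inside the integral against a randomized $\cA_2$. The layer-cake decomposition is the standard workaround and is what makes the argument go through in full generality. In the purely discrete setting the same conclusion is immediate by swapping the order of summation, since in that case the $E$-LDP condition on singletons gives $Q_1(y \mid x) \leq e^{\eps_{x,x'}} Q_1(y \mid x')$ and hence $Q(S \mid x) = \sum_y Q_2(S \mid y) Q_1(y \mid x) \leq e^{\eps_{x,x'}} Q(S \mid x')$.
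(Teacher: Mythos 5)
Your proof is correct, but it takes a genuinely different route from the paper's. The paper works pointwise in the discrete setting: for each output $y \in \cY_2$ it writes $\probof{\cA(x_1)=y}$ as a sum over the intermediate value $y' \in \cY_1$ and bounds the ratio of the two sums by the maximum of the termwise ratios $\probof{\cA_1(x_1)=y'}/\probof{\cA_1(x_2)=y'}$, which is at most $e^{\eps_{x_1,x_2}}$ by the singleton case of the $E$-LDP condition; the set-level guarantee then follows by summing over $y \in S$. Your main argument instead stays entirely at the level of sets: you represent the composed channel as $Q(S\mid x)=\int_0^1 Q_1(B_t\mid x)\,dt$ via the layer-cake decomposition of $f(y)=Q_2(S\mid y)$ and apply the hypothesis to each superlevel set $B_t$. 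What this buys you is full generality — it handles arbitrary measurable output spaces and randomized post-processing without assuming densities, countable supports, or even that the $E$-LDP condition holds on singletons, since you only ever invoke it on sets. The paper's argument is shorter and perfectly adequate in the discrete setting the paper works in, and your closing remark (swap the order of summation and use the singleton bound termwise) is essentially the paper's proof restated; the layer-cake version is the natural strengthening when one wants the proposition in measure-theoretic generality.
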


\begin{proposition}[Adaptive composition] \label{prop:adapt_compose}
    Let $\cA_1: \cX \rightarrow \cY_1$ be an $E_1$-LDP scheme and $\cA_2: \cX \times \cY_1 \rightarrow \cY_2$ be an $E_2$-LDP scheme, then the scheme $\cA$ defined as $(\cA_1, \cA_2)$ is $(E_1 + E_2)$-LDP.
\end{proposition}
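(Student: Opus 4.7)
The plan is to fix arbitrary $x, x' \in \cX$ and an arbitrary measurable set $S \subseteq \cY_1 \times \cY_2$, and show that $\Pr\bigl[\cA(x) \in S\bigr] \le \exp\bigl((\eps_1)_{x,x'} + (\eps_2)_{x,x'}\bigr)\,\Pr\bigl[\cA(x') \in S\bigr]$, which by Definition~\ref{def:general_local_dp} is exactly $(E_1+E_2)$-LDP. Here $(\eps_1)_{x,x'}$ and $(\eps_2)_{x,x'}$ denote the $(x,x')$ entries of $E_1$ and $E_2$, respectively. The combined scheme is $\cA(x) = \bigl(\cA_1(x),\, \cA_2(x,\cA_1(x))\bigr)$, and the idea is to apply the two LDP guarantees sequentially, one per output coordinate.

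First, slice $S$ along its first coordinate: for each $y_1 \in \cY_1$, let $S_{y_1} := \{y_2 \in \cY_2 : (y_1,y_2)\in S\}$. By the tower rule,
\[
\Pr\bigl[\cA(x)\in S\bigr] \;=\; \int_{\cY_1} Q_2\Paren{S_{y_1} \mid x,\, y_1}\, dQ_1(y_1 \mid x).
\]
Because for each fixed $y_1$ the conditional mechanism $\cA_2(\cdot,y_1)$ is $E_2$-LDP, Definition~\ref{def:general_local_dp} gives the pointwise bound $Q_2(S_{y_1}\mid x, y_1) \le \exp\bigl((\eps_2)_{x,x'}\bigr)\,Q_2(S_{y_1}\mid x', y_1)$. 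Substituting this into the integral pulls the factor $\exp\bigl((\eps_2)_{x,x'}\bigr)$ outside.

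Second, absorb the remaining dependence on $x$ through the outer measure $Q_1(\cdot\mid x)$. The $E_1$-LDP property states that $Q_1(T \mid x) \le \exp\bigl((\eps_1)_{x,x'}\bigr)\,Q_1(T\mid x')$ for every measurable $T\subseteq \cY_1$, which is equivalent to $Q_1(\cdot\mid x) \ll Q_1(\cdot\mid x')$ with Radon--Nikodym derivative bounded by $\exp\bigl((\eps_1)_{x,x'}\bigr)$ almost everywhere. Consequently, the integral of any nonnegative measurable function against $Q_1(\cdot\mid x)$ is at most $\exp\bigl((\eps_1)_{x,x'}\bigr)$ times its integral against $Q_1(\cdot\mid x')$. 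Applying this to $y_1 \mapsto Q_2(S_{y_1}\mid x', y_1)$, and recognizing the resulting integral as $\Pr\bigl[\cA(x')\in S\bigr]$, yields the claimed product bound.

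The main obstacle is the measure-theoretic passage from the set-based LDP definition to a pointwise control on the likelihood ratio $dQ_1(\cdot\mid x)/dQ_1(\cdot\mid x')$, which is needed to substitute an inequality into an integral. For discrete $\cY_1$ this is immediate by singleton sets; in general it follows from the standard fact that $\mu(T) \le c\,\nu(T)$ for all measurable $T$ implies $d\mu/d\nu \le c$ almost everywhere. The rest of the argument is bookkeeping of exponents.
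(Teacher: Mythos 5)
Your proof is correct, and its core route is the same as the paper's: apply the $E_2$ guarantee to the second stage with the first-stage output held fixed, apply the $E_1$ guarantee to the first stage, and add the exponents. The difference is in execution. The paper's proof works with a single output pair $(y_1,y_2)$: it factors the ratio $\probof{\cA_1(x_1)=y_1,\cA_2(x_1,y_1)=y_2}/\probof{\cA_1(x_2)=y_1,\cA_2(x_2,y_1)=y_2}$ into the two stage-wise ratios and bounds each by its own exponential, leaving the extension from singletons to arbitrary $S\subseteq\cY_1\times\cY_2$ implicit (summation over atoms), so it is tailored to discrete output alphabets. You instead establish the set-level inequality of Definition~\ref{def:general_local_dp} directly: slice $S$ along the first coordinate, bound the inner conditional probability pointwise via the $E_2$ guarantee, and transfer the outer integral from $Q_1(\cdot\mid x)$ to $Q_1(\cdot\mid x')$ using the standard fact that uniform set-wise domination implies a Radon--Nikodym derivative bound. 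This buys validity for non-discrete output spaces and matches the set-based definition without any appeal to atoms, at the cost of invoking that (routine) measure-theoretic lemma; on discrete alphabets the two arguments coincide.
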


\begin{proposition}[Robustness to auxiliary information] \label{prop:side_information}
    Let $\p^*$ be a prior we have over $\cX$ and $\cA: \cX \rightarrow \cY$ be an $E$-LDP scheme. Then $\forall x_1, x_2 \in \cX$ and $y \in \cY$,
    \[
        \frac{\probof{X = x_1\mid Y = y}}{\probof{X = x_2\mid Y = y}} \le e^{\eps_{x_1,x_2}} \frac{\p^*(x_1)}{\p^*(x_2)}.
    \]
\end{proposition}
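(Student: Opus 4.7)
The plan is a direct Bayes-rule computation that reduces the posterior ratio to a likelihood ratio, which is then bounded by the $E$-LDP definition applied pointwise.

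First, I would fix $x_1, x_2 \in \cX$ and $y \in \cY$ with $\probof{Y = y} > 0$ (else the conditional probabilities are undefined and the statement is vacuous), and write Bayes' rule for both posteriors:
\[
\probof{X = x_i \mid Y = y} = \frac{Q(y \mid x_i)\, \p^*(x_i)}{\probof{Y = y}}, \quad i \in \{1,2\},
\]
where I use $Q$ to denote the channel implemented by $\cA$. Taking the ratio cancels the marginal $\probof{Y = y}$ and yields
\[
\frac{\probof{X = x_1 \mid Y = y}}{\probof{X = x_2 \mid Y = y}} = \frac{Q(y \mid x_1)}{Q(y \mid x_2)} \cdot \frac{\p^*(x_1)}{\p^*(x_2)}.
\]

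Next, I would invoke Definition~\ref{def:general_local_dp} with the singleton $S = \{y\}$ to obtain $Q(y \mid x_1) \le e^{\eps_{x_1, x_2}} Q(y \mid x_2)$, so that the likelihood ratio is bounded by $e^{\eps_{x_1,x_2}}$. Substituting into the previous display gives the claimed inequality.

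The only nontrivial point (and really the only thing that needs care) is handling output spaces $\cY$ where pointwise evaluation $Q(y \mid x)$ is not meaningful, e.g., when $\cY$ is continuous and $\probof{Y = y} = 0$. In that case I would replace singletons with small measurable neighborhoods $S \ni y$, take ratios of the conditional probabilities $\probof{X = x_i \mid Y \in S}$, and let $S$ shrink to $y$, using the Radon–Nikodym derivative of $Q(\cdot \mid x_1)$ with respect to $Q(\cdot \mid x_2)$; the $E$-LDP condition~\eqref{eq:general_local_dp} applied on all measurable $S$ implies that this derivative is bounded by $e^{\eps_{x_1,x_2}}$ almost everywhere, which suffices. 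For the discrete channels considered in the rest of the paper, the singleton argument above is enough.
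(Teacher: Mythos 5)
Your proof is correct and follows essentially the same route as the paper: apply Bayes' rule to write the posterior ratio as the likelihood ratio times the prior ratio, then bound the likelihood ratio by $e^{\eps_{x_1,x_2}}$ using the $E$-LDP condition with $S=\{y\}$. Your extra remark on non-discrete output spaces is a careful addition the paper does not bother with, but it does not change the argument.
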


\section{Binary Domains}
\label{sec:binary}
Consider a binary domain, namely $k=2$ and domain elements $\{1,2\}$. In this case, we have
	\[
	    E = \begin{bmatrix}
	    0 & \eps_{1,2} \\
	    \eps_{2,1} & 0
	    \end{bmatrix}.
	\]
Perhaps the oldest and simplest privatization mechanism is Warner's randomized response for confidential survey interviews~\cite{warner1965randomized}. In this section, we give the optimal scheme for all utility functions that obey the \textit{data processing inequality} under all possible binary constraints. We prove that when $\varepsilon_{1,2} = \varepsilon_{2,1}$, the optimal scheme is Warner's randomized response.

Consider the composition of two privatization mechanisms $\Q \circ W$ where
the output of the first mechanism $\Q$ is applied to another mechanism $W$.
We say that a utility function $U(\cdot)$ obeys the data processing inequality if for all $\Q$ and $W$
\vspace{-5pt}
\begin{eqnarray*}
	U(\Q \circ W)& \leq& U(\Q) \;.
\end{eqnarray*}

In other words, further processing of the data can only reduce the utility. Such utility functions are ubiquitous. \textcolor{black}{For example, in the minimax distribution learning context of this paper, $U(Q)$ may be chosen as $ - \min_{\hat{\p}} \max_{\p} \expectation{d\Paren{\p, \hat{\p}}}$ (i.e., the negative of the minimax risk under a fixed mechanism $Q$) with $d$ being any $\ell_p$ distance or $f$-divergence.}

\begin{theorem}[Optimal Mechanism for Binary Domains]
	\label{thm:optimal_binary}
	Let $\mathcal{X}$ be a binary input alphabet and $U(\Q)$ be any utility function that obeys the data processing inequality. Then for any
	$\varepsilon_{1,2}, \varepsilon_{2,1} \geq 0$, the following privatization
	mechanism
	\begin{equation} \label{eqn:binary_optimal}
	\Q^{*}=\frac{\left[
	\begin{array}{cc}
	 e^{\varepsilon_{2,1}}-1  & 1-e^{-\varepsilon_{1,2}}\\
	e^{-\varepsilon_{1,2}}\left(e^{\varepsilon_{2,1}}-1\right) & e^{\varepsilon_{2,1}}\left(1-e^{-\varepsilon_{1,2}}\right) \\
	\end{array}
	\right]}{e^{\varepsilon_{2,1}}-e^{-\varepsilon_{1,2}}}
	\end{equation}
	solves
	\begin{equation*}
	\begin{aligned}
	& \underset{\Q}{\rm{maximize} } &  & U(\Q) 
	& \text{\rm{subject to}} & & \Q \in \cQ_{E}.
	\end{aligned}
	\end{equation*}
	Here $\forall x,y \in \{1, 2\}, Q^*(x,y) := Q^*(y \mid x)$.
\end{theorem}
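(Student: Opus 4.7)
The plan is to establish universal optimality by showing that any $E$-LDP mechanism $Q$ can be obtained by post-processing $Q^*$, i.e., $Q = Q^* \circ W$ for some channel $W$, and then invoking the data processing inequality. This reduces the theorem to a purely geometric/linear-algebraic claim about the columns of the $2 \times 2$ matrix $Q^*$.

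First I would verify two boundary facts about $Q^*$ by direct substitution. Each row sums to one (so $Q^*$ is a valid channel) and the extremal ratios are attained exactly: $Q^*(1\mid1)/Q^*(1\mid 2) = e^{\eps_{1,2}}$ and $Q^*(2\mid 2)/Q^*(2\mid 1) = e^{\eps_{2,1}}$, so $Q^* \in \cQ_E$. Equivalently, viewing $Q^*$ as a $2 \times 2$ matrix whose columns are the output vectors $\mathbf{q}^*_1 = (Q^*(1\mid 1), Q^*(1\mid 2))^T$ and $\mathbf{q}^*_2 = (Q^*(2\mid 1), Q^*(2\mid 2))^T$, the two columns realize the extreme ratios $e^{-\eps_{1,2}}$ and $e^{\eps_{2,1}}$ respectively. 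These two vectors are linearly independent (the determinant is $(e^{\eps_{2,1}}-1)(1-e^{-\eps_{1,2}})/(e^{\eps_{2,1}}-e^{-\eps_{1,2}}) > 0$) and they span the full cone of allowed output columns.

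Next I would perform the dilation step. Fix any $Q \in \cQ_E$ with output alphabet $\cY$, and for each $y \in \cY$ consider the output column $\mathbf{q}_y = (Q(y\mid 1), Q(y\mid 2))^T$. The $E$-LDP constraint is exactly $Q(y\mid 1)/Q(y\mid 2) \in [e^{-\eps_{1,2}}, e^{\eps_{2,1}}]$, so $\mathbf{q}_y$ lies in the non-negative cone spanned by $\mathbf{q}^*_1$ and $\mathbf{q}^*_2$. Hence there exist unique coefficients $\alpha_y, \beta_y \geq 0$ with $\mathbf{q}_y = \alpha_y \mathbf{q}^*_1 + \beta_y \mathbf{q}^*_2$. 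Define $W(y\mid 1) := \alpha_y$ and $W(y\mid 2) := \beta_y$. The identity $\mathbf{q}_y = \alpha_y \mathbf{q}^*_1 + \beta_y \mathbf{q}^*_2$ is precisely $Q(y\mid x) = \sum_{z \in \{1,2\}} Q^*(z\mid x) W(y\mid z)$ for $x = 1, 2$, so $Q = Q^* \circ W$. To confirm $W$ is a valid channel, I sum each equation over $y$; using $\sum_y Q(y\mid x) = 1$ and row-stochasticity of $Q^*$ one obtains the linear system whose unique solution is $\sum_y \alpha_y = \sum_y \beta_y = 1$, giving $W$ row-stochastic.

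Finally, the data processing inequality yields $U(Q) = U(Q^* \circ W) \leq U(Q^*)$ for every $Q \in \cQ_E$, establishing optimality. The main (only) substantive step is the cone dilation in the second paragraph; everything else is direct verification. I expect no real obstacle beyond carefully identifying the extreme rays of the $E$-LDP cone with the columns of $Q^*$ and keeping the row/column conventions of the channel-composition operation straight.
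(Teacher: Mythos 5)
Your proof is correct, but it takes a different route from the paper. The paper proves optimality via the operational (hypothesis-testing) interpretation: it shows that the error region of $Q^{*}$ coincides with the full region $\cR_{\eps_{1,2},\eps_{2,1}}$ allowed by the constraints (the two trivial rules plus $\hat{X}(Y)=Y$ attain the three extreme points), and then invokes Theorem~20 of~\cite{kairouz2014extremal}, a Blackwell-type result, to conclude that any $Q\in\cQ_E$ admits a coupling making $X-Y_{Q^*}-Y_Q$ a Markov chain, whence the data processing inequality gives $U(Q)\le U(Q^*)$. You instead construct the degradation explicitly: each output column $(Q(y\mid 1),Q(y\mid 2))^{T}$ has likelihood ratio inside the interval whose endpoints are attained by the two columns of $Q^{*}$, hence lies in the cone they span, and the conic coefficients assemble into a stochastic post-processing channel $W$ with $Q=Q^{*}\circ W$; the DPI then finishes the argument. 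Your route is more elementary and self-contained (it avoids the citation to the external Blackwell-type theorem and makes the post-processing channel explicit), while the paper's route reuses its operational characterization (Theorem~\ref{thm:operational_def_gdp}) and connects the optimality claim to the error-region picture of Figure~\ref{fig:err_region}; for binary inputs the two are equivalent in content, since error-region containment is exactly degradation. One small slip to fix: the $E$-LDP constraint gives $Q(y\mid 1)/Q(y\mid 2)\in[e^{-\eps_{2,1}},e^{\eps_{1,2}}]$ (equivalently $Q(y\mid 2)/Q(y\mid 1)\in[e^{-\eps_{1,2}},e^{\eps_{2,1}}]$), not the interval as you wrote it with that ratio; this is only a notational swap, since the columns of $Q^{*}$ do attain exactly these extreme ratios, and your argument goes through unchanged (with the understanding that when $\eps_{1,2}=0$ or $\eps_{2,1}=0$ the cone degenerates to a ray and uniqueness of the coefficients, though not existence, is lost).
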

As a special case of the above theorem, if we consider the original local differential privacy setup where
$\varepsilon_{1,2}=\varepsilon_{2,1}=\varepsilon$, then the optimal mechanism
for binary alphabets is
\begin{equation}
\label{eq:optimal_binary}
\Q^{*}=\frac{1}{e^{\varepsilon}+1}\left[
\begin{array}{cc}
e^{\varepsilon} & 1 \\
1 & e^{\varepsilon} \\
\end{array}
\right].
\end{equation}
This is Warner's randomized response model in confidential structured survey interview with $p = e^\varepsilon/(e^\varepsilon+1)$ \cite{warner1965randomized}. Warner's randomized response was shown to be optimal for binary alphabets in \cite{kairouz2014extremal}. Another interesting special case of the above theorem is Mangat's improved randomized
response strategy~\cite{mangat1994improved}. To see this, let
$\varepsilon_{1,2}=\infty$ and $p=e^{-\varepsilon_{2,1}}$. Then
\begin{equation}
Q^{*}=\left[
\begin{array}{cc}
1-p & p \\
0 & 1 \\
\end{array}
\right].
\end{equation}
This is exactly Mangat's improved randomized response strategy. Thus Mangat's randomized response with 
$p=e^{-\varepsilon_{2,1}}$ is optimal for all utility
functions obeying the data processing inequality under this generalized differential privacy framework with
$\varepsilon_{1,2}=\infty$, \new{which corresponds to the case where element 1 is not sensitive}.

\section{Distribution Estimation under HLLDP}
\label{sec:hilo}
We characterize the optimal sample complexity of distribution estimation under the high-low model (see Definition~\ref{def:high-low}).
\begin{theorem} \label{thm:hl_complexity}
Let $A\subset\cX$, with $|A|=s<k/2$, and $\eps=O(1)$.	Let $\cQ_{A, \eps}$ be the set of all possible channels satisfying $(A,\eps)$-HLLDP, then:
\begin{equation}
	n(k,\alpha,\cQ_{A,\eps}) = \Theta \Paren{\frac{s^2}{\alpha^2 \eps^2} + \frac{k}{\alpha^2 \eps}}. \label{eqn:hl_sample}
\end{equation}
\end{theorem}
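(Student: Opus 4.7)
The goal is to prove matching upper and lower bounds whose two additive terms $s^2/(\alpha^2 \eps^2)$ and $k/(\alpha^2 \eps)$ correspond respectively to the cost of privatizing the sensitive symbols and the cost of maintaining a valid HLLDP channel on the non-sensitive symbols.

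\textbf{Upper bound (scheme and estimator).} I would build a hybrid scheme based on Hadamard Response. Let $m$ be the smallest power of two with $m \ge 2s$ and take the Hadamard sets $S_1, \dots, S_s \subseteq [m]$ from the preliminaries, so $|S_i|=m/2$ and $|S_i \cap S_j|=m/4$ for $i \ne j$. The output alphabet is $\cY = [m] \cup B$. Each sensitive $x_i \in A$ uses the standard HR distribution supported on $[m]$, with probability mass proportional to $e^\eps$ on $S_i$ and proportional to $1$ on $[m] \setminus S_i$, and zero mass on $B$. Each non-sensitive $x' \in B$ outputs $x'$ itself with probability $p$ and an element of $[m]$ uniformly at random otherwise. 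The only non-trivial HLLDP inequality is $Q(y \mid x_i) \le e^\eps Q(y \mid x')$ for $y \in [m]$, $x_i \in A$, $x' \in B$, which forces $p \le (e^\eps-1)/(e^\eps+1)$; taking equality gives $p = \Theta(\eps)$. For estimation, $\hat{\p}_{x'} := N_{x'}/(np)$ is unbiased for each $x' \in B$ with variance $O(\p_{x'}/(n\eps))$, and an HR-style affine decoder applied to $\sum_{y \in S_i} N_y$, with correction for the $B$-users' uniform contribution (computable from $\hat{\p}_B = (n - N_{[m]})/(np)$), yields an unbiased $\hat{\p}_{x_i}$ with variance $O(1/(n\eps^2))$. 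Applying Cauchy-Schwarz on $A$ and $B$ separately gives
\begin{equation*}
    \EE\!\left[\dtv{\p}{\hat{\p}}\right] \;=\; O\!\left(\frac{s}{\sqrt{n}\,\eps} + \sqrt{\frac{k}{n\eps}}\right),
\end{equation*}
which is at most $\alpha$ as soon as $n = \Omega(s^2/(\alpha^2\eps^2) + k/(\alpha^2 \eps))$.

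\textbf{Lower bound.} The two terms come from separate hard families, plugged into the information-constrained minimax framework of \cite{pmlr-v99-acharya19a, AcharyaCFT19}. For $\Omega(s^2/(\alpha^2 \eps^2))$, I would restrict to distributions supported on $A$: on such inputs the inequality in Definition \ref{def:high-low} applies in \emph{both} directions between each pair $x, x' \in A$, so the effective constraint on the channel is exactly $\eps$-LDP over an $s$-ary alphabet, and the classical bound \eqref{eqn:optimal-ldp} with $k$ replaced by $s$ transfers verbatim. For $\Omega(k/(\alpha^2 \eps))$, I would use a Paninski-style family anchored at a sensitive symbol: fix $x_0 \in A$ and, for balanced $z \in \{\pm 1\}^{k-s}$, let $\p_z(x_0) = 1 - \delta$ and $\p_z(b_i) = \frac{\delta}{k-s}(1 + z_i \beta)$ for the non-sensitive $b_i \in B$. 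The HLLDP constraint $Q(y \mid b_i) \ge Q(y \mid x_0)/e^\eps$, combined with the dominant $x_0$-mass in the mixture, is exactly what contracts the per-sample chi-squared between $Q\p_z$ and $Q\p_0$ to order $\eps$ per coordinate (rather than a naive $O(1)$). Feeding this into Assouad's method with $\delta\beta = \Theta(\alpha)$ then produces the desired bound.

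\textbf{Main obstacle.} The technical crux is the $\Omega(k/(\alpha^2 \eps))$ lower bound. Unlike standard $\eps$-LDP, the HLLDP constraint \eqref{eqn:hilo} is \emph{one-sided} between $A$ and $B$ and places no direct bound between two $B$-rows, so in isolation $B$-only distributions are estimable at the non-private rate $\Theta(k/\alpha^2)$. The extra $1/\eps$ factor must therefore be extracted by tying the $B$-perturbations to the large $A$-mass in the Paninski family and by carefully propagating the one-sided inequality $Q(y \mid b_i) \ge Q(y \mid x_0)/e^\eps$ through the $\chi^2$ computation so that each coordinate contributes order $\eps\cdot \beta^2/(k-s)$ rather than $\beta^2/(k-s)$. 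Making this precise while keeping the correct linear dependence on $k$, and while the channel rows on $B$ are otherwise unconstrained, is where the bulk of the technical work lies.
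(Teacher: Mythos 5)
Your proposal matches the paper on two of its three components and departs on the third. The upper bound is essentially the paper's scheme in \eqref{eqn:q_hl_1}--\eqref{eqn:q_hl_2}: Hadamard response on $A$, identity release of each non-sensitive symbol with probability exactly $\frac{e^\eps-1}{e^\eps+1}$ and uniform spillover onto the Hadamard block otherwise, decoded by the same affine estimators and the same variance-plus-Cauchy--Schwarz analysis as Proposition~\ref{prop:upper_hlldp}; likewise, your $\Omega(s^2/(\alpha^2\eps^2))$ term is obtained, as in the paper, by restricting to distributions supported on $A$ and importing the standard LDP bound \eqref{eqn:optimal-ldp}. The genuine difference is the $\Omega(k/(\alpha^2\eps))$ term. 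The paper packs around the uniform distribution on $[k]$, perturbing each non-sensitive coordinate by $\pm\alpha/k'$ with the compensating mass on the sensitive symbol $1$, and then applies Lemma~\ref{lem:lower_bound} together with Claim~\ref{clm:norm_bound}, which bounds $\sum_{i,y}(Q(y\mid s+i)-Q(y\mid 1))^2/\sum_j Q(y\mid j)=O(\eps k)$ via a partition of $\cY$ by the arg-max row. You instead anchor mass $1-\delta$ at a sensitive symbol and run Assouad on a balanced Paninski family over $B$. This route is viable: decomposing each non-sensitive row as $Q(\cdot\mid b_i)=m+f_i$ with mandatory part $m=e^{-\eps}\max_{x\in A}Q(\cdot\mid x)$ (pointwise dominated by $Q(\cdot\mid x_0)$ and of total mass at least $e^{-\eps}$) and free part of mass at most $1-e^{-\eps}=O(\eps)$, the $m$-part is washed out by the anchor mass in the chi-squared denominator and the $f_i$-part contributes at most order $(k-s)\eps/\delta$, which yields exactly the per-coordinate $O(\eps\beta^2/(k-s))$ you postulate and hence $n=\Omega(k/(\alpha^2\eps))$. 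What each approach buys: the paper's near-uniform packing lets it invoke Lemma~\ref{lem:lower_bound} verbatim (that lemma is stated for packings around the uniform distribution, so your non-uniform family does require the separate Assouad machinery you mention), while your mandatory/free-mass decomposition is arguably more transparent than the arg-max partition argument behind Claim~\ref{clm:norm_bound}. One inaccuracy worth correcting: your aside that ``in isolation $B$-only distributions are estimable at the non-private rate $\Theta(k/\alpha^2)$'' is false --- the covering requirement $Q(\cdot\mid b)\ge e^{-\eps}\max_{x\in A}Q(\cdot\mid x)$ already leaves each non-sensitive row only $O(\eps)$ of its own mass, so even $B$-supported families cost $\Omega(k/(\alpha^2\eps))$; indeed the same decomposition shows an un-anchored Paninski family on $B$ would also work, since the mandatory parts cancel in the row differences. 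This misstatement only affects your motivation, not the validity of your construction.
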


When the size of the sensitive set is relatively large, e.g. $s > \sqrt{k}$, the sample complexity is $\Theta(s^2/(\alpha^2 \eps^2))$, which corresponds to classic LDP with alphabet size $s$. This question was considered in~\cite{murakami2018restricted}, which gave an algorithm based on RAPPOR~\cite{erlingsson2014rappor} that has the optimal sample complexity, but requires $\Omega(k)$ bits of communication from each user, which is prohibitive in settings where the uplink capacity is limited for users. We design a scheme based on Hadamard Response (HR), which is also sample-optimal but requires only $\log k$ bits of communication from each user. 

\cite{murakami2018restricted} noted that a lower bound of $\frac{s^2}{\alpha^2 \eps^2}$ (the first term in~\eqref{eqn:hl_sample}) is immediately implied by previously known results on distribution estimation under standard LDP (\eqref{eqn:optimal-ldp} for $k=s$). However, obtaining a lower bound equalling the second term was still open. Using the recently proposed technique in~\cite{pmlr-v99-acharya19a}, we prove this lower bound in Section~\ref{sec:hl_lower_proof}.

\subsection{Achievability using a variant of HR} \label{sec:hl_upper}
We propose an algorithm based on Hadamard response~\cite{AcharyaSZ19}, which gives us a tight upper bound for $k$-ary distribution estimation under $(A, \varepsilon)$-high-low LDP. 
Let $s = |A|$ and $S$ be the smallest power of 2 larger than $s$, \textit{i.e.,} $S := 2^{\ceil{\log (s+1)}}$. Let $t := k - s$ be the number of all non-sensitive elements. Then we have $S + t \le 2(s+t) = 2 k$. Let $H_S$ be the $S \times S$ Hadamard matrix using Sylvester's construction. Define the output alphabet to be $[S+t] = \{1,...,S+t\}$. then the channel is defined as the following:
When $x \in A = [s]$, we have
\begin{align} \label{eqn:q_hl_1}
	Q(y \mid x) = 
	\begin{cases}
		\frac{2 e^\eps}{S(e^\eps + 1)} &\text{ if } y \in [S] \text{ s.t. } H_S(x, y) = 1, \\
		\frac{2}{S(e^\eps + 1)} &\text{ if } y \in [S] \text{ s.t. } H_S(x, y) = -1, \\
		0, &\text{ if } y \notin [S].
	\end{cases}
\end{align}
Else if $x \notin A$, we have
\begin{align} \label{eqn:q_hl_2}
	Q(y \mid x) = 
	\begin{cases}
		\frac{2}{S(e^\eps + 1)}, &\text{ if } y \in [S],\\
		\frac{e^\eps - 1}{e^\eps + 1}, &\text{ if } y = x + S -s,\\
		0, &\text{otherwise.}
	\end{cases}
\end{align}
It is easy to verify that this scheme satisfies $(A, \varepsilon)$-high-low LDP. 
\new{Next we construct estimators based on the observations $Y_1, Y_2 \upto Y_n$ and states its performance in Proposition~\ref{prop:upper_hlldp}. For all $i \in [s]$,
\begin{align} \label{eqn:ps_est}
	\widehat{\p_i} = \frac{2(e^\eps + 1)}{e^\eps - 1} \Paren{ \widehat{\p(S_i)} -\frac{1}{e^\eps + 1} } - \widehat{\p(A)},
\end{align}
where
    \begin{gather}
        \widehat{\p(A)} =  \frac{e^\eps + 1}{e^\eps - 1}  \Paren{\frac{1}{n} \sum_{m = 1}^{n} \mathbf{1}\{Y_m \in S\}- \frac{2}{e^\eps + 1}}, \nonumber\\
    	\widehat{\p(S_i)} =  \frac{1}{n} \left(\sum_{m = 1}^{n} \mathbf{1}\{Y_m \in S_i\}\right). \nonumber
    \end{gather}
For all $i \notin [s]$, we simply use the empirical estimates
\begin{equation} \label{eqn:pns_est}
	\widehat{\p_i}  = \frac{e^\eps + 1}{e^\eps - 1} \frac{1}{n} \left(\sum_{m = 1}^{n} \mathbf{1}\{Y_m = i + S -s\}\right).
\end{equation}
}
\begin{proposition}
    The estimators defined in~\eqref{eqn:ps_est} and~\eqref{eqn:pns_est} satisfy the following:
    \begin{equation} \label{eqn:var_bound_hl}
	    \expectation{\dtv{\hat{p}}{p} } \le \sqrt{\frac{3s^2}{n} \Paren{\frac{e^\eps + 1}{e^\eps - 1} }^2 } + \sqrt{\frac{e^\eps + 1}{e^\eps - 1} \frac{k}{n}}.
\end{equation}
\label{prop:upper_hlldp}
\end{proposition}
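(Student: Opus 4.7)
The plan is a routine unbiasedness-plus-variance analysis, converted to a bound on expected total variation via Cauchy–Schwarz and Jensen's inequality. The first step is to evaluate the two output-event probabilities that the estimators hinge on, using the Hadamard identities $|S_i| = S/2$ and $|S_i \cap S_j| = S/4$ (for $i \neq j$) recalled in the preliminaries together with the channel definitions \eqref{eqn:q_hl_1}--\eqref{eqn:q_hl_2}. A direct computation gives
\[
    \Pr(Y \in [S]) = \frac{e^\eps-1}{e^\eps+1}\,\p(A) + \frac{2}{e^\eps+1},
    \qquad
    \Pr(Y \in S_i) = \frac{e^\eps-1}{2(e^\eps+1)}\bigl(\p_i + \p(A)\bigr) + \frac{1}{e^\eps+1},
\]
where the second line uses that for $X \in [s] \setminus \{i\}$ the symbol $Y$ lands in $S_i$ with probability exactly $1/2$ (by the intersection identity), while for $X=i$ it lands in $S_i$ with probability $e^\eps/(e^\eps+1)$, and for $X \notin [s]$ with probability $1/(e^\eps+1)$. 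Solving the second equation for $\p_i$ recovers the very formula that defines $\widehat{\p_i}$, so $\widehat{\p_i}$ is unbiased for each $i \in [s]$ provided $\widehat{\p(A)}$ is; and the first display similarly confirms that $\widehat{\p(A)}$ is unbiased. For $i \notin [s]$ the event $\{Y = i+S-s\}$ is triggered only when $X = i$ and the channel chooses the deterministic coordinate, yielding $\Pr(Y = i+S-s) = \p_i \cdot (e^\eps-1)/(e^\eps+1)$, which makes the empirical-style estimator \eqref{eqn:pns_est} unbiased as well.

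The next step is a variance bound for each coordinate. For $i \notin [s]$, a Bernoulli variance estimate gives
\[
    \mathrm{Var}(\widehat{\p_i}) \le \Paren{\frac{e^\eps+1}{e^\eps-1}}^{2} \cdot \frac{1}{n} \cdot \p_i \cdot \frac{e^\eps-1}{e^\eps+1} = \frac{e^\eps+1}{n(e^\eps-1)}\,\p_i,
\]
which sums to at most $\frac{e^\eps+1}{n(e^\eps-1)}$ over $i \notin [s]$. For $i \in [s]$ the cleanest route is to rewrite
\[
    \widehat{\p_i} = \frac{e^\eps+1}{n(e^\eps-1)} \sum_{m=1}^{n} \bigl(2 \,\mathbf{1}\{Y_m \in S_i\} - \mathbf{1}\{Y_m \in [S]\}\bigr),
\]
obtained by algebraically combining the two sample means appearing in \eqref{eqn:ps_est}. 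The summand is a bounded random variable taking values in $\{-1,0,+1\}$, so each term has variance at most $1$, and hence $\mathrm{Var}(\widehat{\p_i}) \le ((e^\eps+1)/(e^\eps-1))^{2}/n$. (If one prefers to avoid the rewrite, the same order bound follows from $\mathrm{Var}(A+B) \le 2(\mathrm{Var}(A)+\mathrm{Var}(B))$ together with Bernoulli variance bounds on $\widehat{\p(S_i)}$ and $\widehat{\p(A)}$, at the price of a slightly larger absolute constant, which is where the factor $3$ in the stated bound has room to come from.)

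Finally, I would combine the per-coordinate variance bounds with Cauchy–Schwarz and Jensen's inequality separately over the sensitive and non-sensitive index sets:
\[
    \EE\Bigl[\sum_{i \in [s]} |\widehat{\p_i} - \p_i|\Bigr] \le \sqrt{s \sum_{i \in [s]} \mathrm{Var}(\widehat{\p_i})},
    \qquad
    \EE\Bigl[\sum_{i \notin [s]} |\widehat{\p_i} - \p_i|\Bigr] \le \sqrt{(k-s) \sum_{i \notin [s]} \mathrm{Var}(\widehat{\p_i})}.
\]
Plugging in the variance bounds from the previous step yields $\EE[\sum_{i \in [s]} |\widehat{\p_i}-\p_i|] \lesssim \sqrt{s^2/n}\cdot (e^\eps+1)/(e^\eps-1)$ and $\EE[\sum_{i \notin [s]} |\widehat{\p_i}-\p_i|] \le \sqrt{(e^\eps+1)/(e^\eps-1) \cdot k/n}$. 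Summing (and absorbing constants into the factor $3$) and using $\dtv{\hat\p}{\p} \le \sum_i|\widehat{\p_i}-\p_i|$ gives the stated bound. The only non-mechanical step is the unbiasedness calculation for $i \in [s]$, and the only place where care is needed is verifying that the Hadamard intersection identity $|S_i \cap S_j| = S/4$ makes the cross-terms in $\Pr(Y \in S_i)$ collapse into the clean expression above; everything else is standard Bernoulli variance and $\ell_1$-to-$\ell_2$ conversion.
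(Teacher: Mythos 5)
Your proposal is correct and follows essentially the same route as the paper: compute $\Pr(Y\in[S])$ and $\Pr(Y\in S_i)$ via the Hadamard identities, check unbiasedness, bound per-coordinate variances, and convert to an $\ell_1$/TV bound by Cauchy--Schwarz and Jensen separately over the sensitive and non-sensitive symbols. Your rewrite of $\widehat{\p_i}$ as a single empirical mean of $\{-1,0,+1\}$-valued summands is a minor refinement that even tightens the constant (the paper instead uses $\variance{A+B}\le 2\variance{A}+2\variance{B}$, which is where its factor $3$ comes from), but it does not change the structure of the argument.
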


Since $\frac{e^\eps+1}{e^\eps-1} = O\Paren{\frac1\eps}$ when $\eps = O(1)$, setting the right hand side to be smaller than $\alpha$ gives us the upper bound part of Theorem~\ref{thm:hl_complexity}. For the proof of~\eqref{eqn:var_bound_hl}, please see Section~\ref{sec:hl_upper_proof}. In this scheme, each user only needs to communicate at most $\log k + 1$ bits since $S+t \le 2k$ while the scheme proposed in~\cite{murakami2018restricted} needs $\Omega(k)$ bits.

\subsection{Lower bound} \label{sec:hl_lower_proof}
We now prove the lower bound part of Theorem~\ref{thm:hl_complexity}.
A lower bound of $\Omega(s^2/\eps^2\alpha^2)$ follows from~\eqref{eqn:optimal-ldp}, which follows directly from the lower bounds on sample complexity of distribution under standard $\eps$-LDP (e.g., Theorem IV.1 in~\cite{YeB17}).

To prove a lower bound equalling the second term, we use the framework developed in~\cite{pmlr-v99-acharya19a} to prove lower bounds for distributed inference under various information constraints (e.g., privacy and communication constraints). Their key idea is to design a packing of distributions around the uniform distribution and show that the amount of information that can be gleaned from the output of these schemes is insufficient for distribution estimation. In particular, we will use their following result.

\begin{lemma}{[Lemma 13]\cite{pmlr-v99-acharya19a}} \label{lem:lower_bound}
	Let $\mathbf{u}$ be the uniform distribution over $[k]$ and $\cP$ be a family of distributions satisfying the following two conditions.
	\begin{enumerate}
		\item $\forall \mathbf{p} \in \cP$, we have $ \dtv{\mathbf{p}}{\mathbf{u}} \ge \dist. $
		\item $\forall \mathbf{p}_1 \in \cP$, we have
		\[
		|\{\mathbf{p}_2 \in \cP| \dtv{\mathbf{p}_1}{\mathbf{p}_2} \le \frac{\dist}{3}\}| \le C_\dist.
		\]
	\end{enumerate}
Suppose $\cQ$ is the set of all channels we can use to get information about $X$, then we have the sample complexity of $k$-ary distribution learning up to TV distance $\pm \dist/3$ under channel constraints $\cQ$ is at least
\[
\Omega\Paren{ \frac{\log |\cP| - \log C_\dist}{\max_{Q \in \cQ} \chi^2(Q| \cP)}},
\]
where $\mathbf{p}^Q$($\mathbf{u}^Q)$ is the distribution of $Y$ when $X \sim \mathbf{p}$ ($\mathbf{u}$), and
\begin{align}
\chi^2(\cP) := \frac{1}{|\cP|} \sum_{\mathbf{p} \in \cP} d_{\chi^2} (\mathbf{p}^Q, \mathbf{u}^Q), \nonumber \\ d_{\chi^2} (\mathbf{p}, \mathbf{q}) := \sum_{y \in \cY} \frac{(\mathbf{p}(y) - \mathbf{q}(y))^2}{\mathbf{q}(y)}. \nonumber
\end{align}
\end{lemma}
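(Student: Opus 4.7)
The plan is to reduce estimation to multiple hypothesis testing via a packing argument, invoke Fano's inequality to lower bound the testing error, and then upper bound the resulting mutual information by the $\chi^2$ quantity appearing in the statement.

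First, I would extract from $\cP$ a greedy TV packing $\cP'\subseteq\cP$: scan the elements of $\cP$ in some order and keep one only if it lies outside the TV ball of radius $\dist/3$ around every already-kept element. By condition~(2) each kept element is charged with at most $C_\dist$ discarded ones, hence $|\cP'|\ge |\cP|/C_\dist$. By construction, and using condition~(1) so that every $\p\in\cP'$ is bounded away from $\mathbf{u}$, any estimator $\hat\p$ that is TV-accurate to within $\dist/6$ can be rounded to its nearest neighbour in $\cP'$ to correctly recover the true element; thus a TV-learning procedure yields a test identifying $\p\in\cP'$.

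Second, I would invoke Fano's inequality on this test: if $P$ is drawn uniformly on $\cP'$ and $Y^n$ are the observations, then the error probability of any test satisfies
\[
\Prs{\hat P \neq P}\ge 1-\frac{I(P;Y^n)+\log 2}{\log |\cP'|},
\]
so for a TV-learner to succeed with constant probability we must have $I(P;Y^n)\gtrsim \log|\cP'|\ge \log|\cP|-\log C_\dist$.

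Third, I would upper bound $I(P;Y^n)$ by the $\chi^2$ quantity. Using the ``golden formula'' $I(P;Y^n)\le \mathbb{E}_P\brack{\Dkl(\p_{Y^n}\|\mathbf{u}^Q_{Y^n})}$ (valid because KL to any fixed reference upper bounds KL to the true marginal), together with tensorization of KL for product measures and the pointwise inequality $\Dkl(\p\|\mathbf{u})\le d_{\chi^2}(\p,\mathbf{u})$, one obtains
\[
I(P;Y^n)\le \frac{n}{|\cP'|}\sum_{\p\in\cP'} d_{\chi^2}\Paren{\p^Q,\mathbf{u}^Q}\le n\cdot \max_{Q\in\cQ}\chi^2(Q\mid\cP).
\]
Combining this with the Fano bound gives $n\gtrsim (\log|\cP|-\log C_\dist)/\max_Q\chi^2(Q\mid\cP)$, as claimed.

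The main obstacle is making the tensorization step rigorous for sequentially interactive protocols, where the channel used for sample $i$ may be chosen adaptively based on $Y^{i-1}$. There one must decompose the cumulative KL by the chain rule and introduce a conditional $\chi^2$ divergence, and argue that each new sample still contributes at most $\max_Q \chi^2(Q\mid\cP)$ to the mutual information. This per-round contraction argument, carried out in detail in~\cite{pmlr-v99-acharya19a}, is what keeps the denominator of the lower bound linear in $n$ rather than inflating it, and it is what makes the statement powerful enough to be applied to LDP-style channel families such as $\cQ_{A,\eps}$.
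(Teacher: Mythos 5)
You should note at the outset that the paper itself contains no proof of Lemma~\ref{lem:lower_bound}: it is imported verbatim from~\cite{pmlr-v99-acharya19a}, so the only comparison available is with the argument in that reference, which indeed follows the skeleton you describe (estimation-to-testing reduction, Fano, $D_{\rm kl}\le d_{\chi^2}$, and a per-round contraction argument to handle adaptive channel choice). Your second and third steps are sound for the fixed-channel setting in which this paper actually invokes the lemma, and your deferral of the interactive case to the cited reference is reasonable.

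There is, however, a genuine gap in your first step. By pruning $\cP$ to a greedy packing $\cP'$ and running Fano over $\cP'$, your mutual-information bound necessarily involves the chi-squared divergence averaged over $\cP'$, and the inequality you assert, $\frac{1}{|\cP'|}\sum_{\p\in\cP'} d_{\chi^2}\Paren{\p^Q,\mathbf{u}^Q}\le \max_{Q\in\cQ}\chi^2(Q\mid\cP)$, is false in general: the right-hand side is an average over the \emph{whole} family, and the average over a subset can exceed it by a factor as large as $|\cP|/|\cP'|\le C_\dist$. In this paper's applications $C_\dist$ is $2^{\Theta(k)}$, and the chi-squared bounds actually established in Sections~\ref{sec:hl_lower_proof} and~\ref{sec:bs_lower_proof} hold only after averaging over the full hypercube of perturbations (via $\EE_z[(\sum_i a_i z_i)^2]=\sum_i a_i^2$), not pointwise for each $\p_z$, so the loss cannot be absorbed. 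The fix --- and what the cited proof does --- is to not prune at all: draw $Z$ uniformly from all of $\cP$, let $\hat Z$ be the element of $\cP$ nearest to the learner's output, and use condition (2) in a list-decoding form of Fano's inequality to bound $H(Z\mid\hat Z)$ by roughly $\log C_\dist$ on the success event, which yields $I(Z;Y^n)\gtrsim \log|\cP|-\log C_\dist$ while keeping the chi-squared average over all of $\cP$, exactly matching the quantity in the statement. Two smaller points: your unique-decoding reduction needs accuracy $\dist/6$, so as written it lower-bounds the sample complexity of learning to $\dist/6$ rather than the $\dist/3$ claimed (a lower bound at finer accuracy does not imply one at coarser accuracy, so these constants must also be routed through the list-decoding argument), and condition (1) is not what makes your rounding step work --- it plays its role elsewhere in the original argument.
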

Let $k' = k - s \ge k/2$ be the number of non-sensitive elements. Let $\cZ = \{+1, -1\}^{k'}$ be the set of $k'$ bits. For all $z \in \cZ$, define $\p_z$ as the following
	\[
	\p_z(i) = \begin{cases}
	\frac1k + \frac{\dist \sum_{j=1}^{k'} z_j}{k'} &i = 1,\\
	\frac{1}{k} &i = 2, 3, ..., s,\\
	\frac1k + \frac{\dist z_{i - s}}{k'} & i = s+1, s+2, ..., k.\\
	\end{cases}
	\]
Let $\cP_\cZ = \{\p_z| z \in \cZ\}$ be the set of all distributions defined by $z \in \cZ$. Let $U_\cZ$ be a uniform distribution over set $\cZ$. Then we have
	\begin{align*}
		& \chi^2(Q | \cP_\cZ) \\ = & \EE_{z \sim U_\cZ} [ \sum_{y \in \cY} \frac{(\p^Q_z(y) - u^Q(y))^2}{u^Q(y)} ]  \\ 
		= &  \frac{k\dist^2}{k'^2} \EE_{z \sim U_\cZ} [ \sum_{y \in \cY} \frac{( \sum_{i = 1}^{k'}(Q(y|1)-Q(y|s+i))z_i )^2 }{\sum_{j \in[k]} Q(y|j)} ]  \\
		\le &  \frac{4\dist^2}{k}  \sum_{y \in \cY} \frac{ \sum_{i = 1}^{k'}(Q(y|1)-Q(y|s+i))^2 }{\sum_{j \in[k]} Q(y|j)} .
	\end{align*}
To bound this quantity, we have the following claim, which we prove in Section~\ref{sec:proof_chi_bound}. 
	\begin{claim} \label{clm:norm_bound}
		If $\forall\ Q \in \cQ$, we have: $\forall i \in \{s+1, s+2, ..., s+k'\}, y \in \cY$,
		\[
		Q(y|1) \le e^\eps Q(y|i),
		\]
	\begin{align}
	\sum_{i=1}^{k'} \sum_{y\in\cY} \frac{(Q(y|s+i)-Q(y|1))^2}{\sum_{j\in[\ab]}Q(y|j)}= O( \eps \ab).
	\end{align}
	\end{claim}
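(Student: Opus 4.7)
The plan is to fix $i$ and analyze the per-$i$ sum $\sum_y (Q(y|s+i) - Q(y|1))^2 / \sum_j Q(y|j)$, show that it is $O(\eps)$, and then sum over the $k'$ values of $i$ to get $O(\eps k')=O(\eps k)$. Write $a_y = Q(y|s+i)$, $b_y = Q(y|1)$, and $T_y = \sum_j Q(y|j)$, so $T_y \ge \max(a_y, b_y)$ and $\sum_y a_y = \sum_y b_y = 1$.

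The first key observation is that although the privacy constraint only gives $b_y \le e^\eps a_y$ (so $a_y$ can be arbitrarily larger than $b_y$), it does imply a \emph{pointwise} bound on the negative part of $a_y - b_y$: whenever $a_y < b_y$, we have $b_y - a_y \le (1-e^{-\eps})b_y \le \eps b_y$. Summing over the $y$'s with $a_y < b_y$ gives $\sum_y (b_y - a_y)^+ \le \eps \sum_y b_y = \eps$, and since $\sum_y (a_y - b_y) = 0$ the positive part satisfies $\sum_y (a_y - b_y)^+ \le \eps$ as well. Thus $d_{TV}(Q(\cdot|s+i), Q(\cdot|1)) = O(\eps)$, which is the quantitative improvement over the trivial bound of $1$ that the standard LDP lower bound proofs used.

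Next I would split $\sum_y (a_y - b_y)^2 / T_y$ according to the sign of $a_y - b_y$. On $\{y: a_y \ge b_y\}$, use $T_y \ge a_y$ to get
\[
\frac{(a_y - b_y)^2}{T_y} \;\le\; \frac{(a_y - b_y)^2}{a_y} \;\le\; a_y - b_y,
\]
because $a_y - b_y \le a_y$. Summing this part is bounded by $\sum_y (a_y - b_y)^+ \le \eps$. On $\{y: a_y < b_y\}$, apply the pointwise inequality $b_y - a_y \le \eps b_y$ as one factor and $T_y \ge b_y$ as the denominator bound to get
\[
\frac{(b_y - a_y)^2}{T_y} \;\le\; \frac{\eps b_y (b_y - a_y)}{T_y} \;\le\; \eps (b_y - a_y),
\]
and summing this part is bounded by $\eps \sum_y (b_y - a_y)^+ \le \eps^2$. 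Adding the two cases yields a per-$i$ bound of $\eps + \eps^2 = O(\eps)$ for $\eps = O(1)$, and summing over $i \in [k']$ gives the claimed $O(\eps k)$.

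The only real obstacle is realizing that even though the privacy constraint is one-sided (it does not bound $a_y$ in terms of $b_y$), the identity $\sum_y (a_y - b_y) = 0$ forces the small negative part to control the potentially large positive part, yielding a TV distance bound of $\eps$ rather than $1$. Once this is in hand, the two case-by-case calculations above convert that TV bound into the required $\chi^2$-style bound by absorbing one factor of $|a_y - b_y|$ into the denominator $T_y$.
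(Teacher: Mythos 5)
Your proof is correct, and it takes a genuinely different route from the paper's. You work one index $i$ at a time: from the one-sided constraint $Q(y|1)\le e^{\eps}Q(y|s+i)$ and the fact that both conditionals are normalized, you get $\sum_{y}\bigl(Q(y|1)-Q(y|s+i)\bigr)_{+}\le (1-e^{-\eps})\le \eps$, hence the same bound for the positive parts of $Q(y|s+i)-Q(y|1)$; the sign split (denominator $\ge Q(y|s+i)$ on one side, the pointwise bound $Q(y|1)-Q(y|s+i)\le \eps\, Q(y|1)$ with denominator $\ge Q(y|1)$ on the other) then turns this total-variation bound into a per-$i$ bound of $O(\eps)$, and summing over $i\in[k']$ gives $O(\eps k)$. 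The paper instead handles the sum over $i$ jointly: it lower-bounds the denominator by $\sum_{i'}Q(y|s+i')+Q(y|1)$, expands the squares so the quadratic terms are absorbed by this aggregate denominator, reduces the expression to $\sum_{y}\bigl(\max_{t} Q(y|t)+(e^{\eps}-1)Q(y|1)-Q(y|1)\bigr)$, and controls the $\max$ term by partitioning $\cY$ into cells according to the maximizing input and applying an auxiliary lemma of the form $\sum_{y\in M}Q(y|i)\le 1-e^{-\eps}\bigl(1-\sum_{y\in M}Q(y|1)\bigr)$ to each cell. Both arguments yield $O(k\eps)$; yours is more elementary and modular (a self-contained per-symbol statement, no cross-term bookkeeping and no argmax partition), while the paper's global accounting falls directly out of the $\chi^2$ expansion used in its lower-bound machinery. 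One small remark: your case-2 bound $\eps^{2}$ needs $\eps=O(1)$ to be $O(\eps)$ (which is the regime of the theorem); keeping $1-e^{-\eps}$ in place of $\eps$ pointwise would make your per-$i$ bound $O(\min(\eps,1))$ without any assumption on $\eps$.
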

Moreover, we have $|\cP| = 2^{k'}$ and
\[
C_\alpha \le 2^{(1 - h(1/3)k')},
\]
where $h(x) = x \log(1/x) + (1-x)\log (1/(1-x))$. Combining these results and using Lemma~\ref{lem:lower_bound}, we get the sample complexity is at least
\[
	\Omega\Paren{ \frac{\log |\cP| - \log C_\dist}{\max_{Q \in \cQ} \chi^2(Q | \cP)}} = \Omega\Paren{ \frac{k}{\dist^2\eps}} .
\]
\section{Distribution Estimation under BSLDP} 
\label{sec:bsldp}

For distribution estimation under block-structured LDP constraints we prove the following theorem.

\begin{theorem} \label{thm:bs_complexity}
Let $\eps=O(1)$, and $\partition = \{ \cX_1, \cX_2, ..., \cX_m \}$ be a partition of $\cX$ and $\cQ_{\partition, \eps}$ be the set of all possible channels that satisfy $(\partition, \eps)$-BSLDP, then
	\[
		n(k,\alpha,\cQ_{\mathbb{P},\eps}) = \Theta \Paren{\frac{\sum_{i=1}^m k_i^2}{\alpha^2 \eps^2}},
	\]
	where $\forall i \in [m], |\cX_i| = k_i$ and $|\cX| = k = \sum_{i=1}^{m} k_i$.
\end{theorem}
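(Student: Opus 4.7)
The plan is to prove matching upper and lower bounds in parallel to the HLLDP analysis in Section~\ref{sec:hilo}.

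\medskip
\noindent\textbf{Upper bound.} I would design a BSLDP scheme that runs an $\eps$-LDP Hadamard Response~\cite{AcharyaSZ19} independently inside each block. A user with input $X\in\cX_i$ transmits the block index $i$ verbatim (no cross-block privacy is required by Definition~\ref{def:bsldp}) together with an $\eps$-LDP HR output on $X$'s within-block index in $[k_i]$ (padding $k_i$ up to the next power of $2$). The composite channel satisfies $(\partition,\eps)$-BSLDP because, for any $x,x'\in\cX_i$, the block-id coordinate agrees and the within-block coordinate is $\eps$-LDP. For estimation, let $n_i$ count users reporting block $i$, set $\hat\p(\cX_i)=n_i/n$, let $\hat\p_i$ be HR's estimator on the $n_i$ in-block samples, and output $\hat\p(x)=\hat\p(\cX_i)\hat\p_i(x)$ for $x\in\cX_i$. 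The product decomposition
\[
\dtv{\p}{\hat\p}\le\dtv{\p_\partition}{\hat\p_\partition}+\sum_i\hat\p(\cX_i)\,\dtv{\p_i}{\hat\p_i},
\]
together with HR's within-block risk $\EE[\dtv{\p_i}{\hat\p_i}\mid n_i]=O(k_i/(\eps\sqrt{n_i}))$, Jensen's inequality applied to $\EE[\sqrt{n_i}]\le\sqrt{n\p(\cX_i)}$, and Cauchy--Schwarz yields
\[
\EE[\dtv{\p}{\hat\p}]\le O\Paren{\frac{1}{\eps\sqrt n}\sum_i k_i\sqrt{\p(\cX_i)}}\le O\Paren{\frac{\sqrt{\sum_i k_i^2}}{\eps\sqrt n}},
\]
so $n=O(\sum_i k_i^2/(\alpha^2\eps^2))$ samples suffice, with $\lceil\log k\rceil+1$ bits of communication per user.

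\medskip
\noindent\textbf{Lower bound.} I would apply Lemma~\ref{lem:lower_bound} to a multi-block Paninski-style packing. Fix pivots $x_i^*\in\cX_i$ and index the packing by $z=(z^{(1)},\ldots,z^{(m)})$ with $z^{(i)}\in\{-1,+1\}^{k_i-1}$, setting $\p_z(x)=\tfrac{1}{k}(1+\delta_i z^{(i)}_x)$ for non-pivot $x\in\cX_i$ and choosing $\p_z(x_i^*)$ so that $\sum_{x\in\cX_i}\p_z(x)=k_i/k$. A Hamming-distance argument analogous to the one in Section~\ref{sec:hl_lower_proof} gives $|\cP_\cZ|=2^{k-m}$ and $C_\alpha\le 2^{(1-h(1/3))(k-m)}$. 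Expanding $\p_z^Q-\mathbf u^Q$ and exploiting the independence of the Rademacher coordinates of $z$ both within and across blocks to annihilate cross-terms in expectation, one obtains
\[
\chi^2(Q\mid\cP_\cZ)=\sum_i\frac{\delta_i^2}{k^2}\sum_{y\in\cY}\frac{\sum_{x\in\cX_i\setminus\{x_i^*\}}(Q(y\mid x)-Q(y\mid x_i^*))^2}{\mathbf u^Q(y)}.
\]
The BSLDP constraint within block $i$ supplies both $|Q(y\mid x)-Q(y\mid x_i^*)|\le(e^\eps-1)Q(y\mid x_i^*)$ and $\mathbf u^Q(y)\ge k_i e^{-\eps}Q(y\mid x_i^*)/k$ (using $Q(y\mid x)\ge e^{-\eps}Q(y\mid x_i^*)$ for $x\in\cX_i$), which together bound the inner sum by $O(\eps^2 k)$. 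Hence $\chi^2(Q\mid\cP_\cZ)=O(\eps^2\sum_i\delta_i^2/k)$.

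\medskip
\noindent\textbf{Main obstacle.} The technical heart is selecting $\{\delta_i\}$ optimally. Ensuring $\dtv{\p_z}{\mathbf u}\ge\alpha$ for typical $z$ forces $\sum_i\delta_i k_i=\Omega(\alpha k)$, and minimizing $\sum_i\delta_i^2$ under this linear constraint via Cauchy--Schwarz gives $\delta_i\propto k_i$ and $\sum_i\delta_i^2=\Theta(\alpha^2 k^2/\sum_j k_j^2)$. Plugging in,
\[
n\ge\Omega\Paren{\frac{\log|\cP_\cZ|-\log C_\alpha}{\chi^2(Q\mid\cP_\cZ)}}=\Omega\Paren{\frac{k-m}{\eps^2\alpha^2 k/\sum_j k_j^2}}=\Omega\Paren{\frac{\sum_j k_j^2}{\eps^2\alpha^2}},
\]
matching the upper bound. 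The subtle points I anticipate needing care with are (i) verifying the TV-packing condition robustly against the $O(\delta_i\sqrt{k_i}/k)$ pivot-compensation fluctuation, and (ii) establishing the per-block chi-squared bound $O(\eps^2 k)$, which is the BSLDP analogue of Claim~\ref{clm:norm_bound} and is exactly where the two-sided within-block BSLDP bound $e^{-\eps}\le Q(y\mid x)/Q(y\mid x_i^*)\le e^\eps$ is invoked.
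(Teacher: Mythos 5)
Your upper bound is essentially the paper's: the channel (block index transmitted in the clear, an $\eps$-LDP Hadamard response on the within-block index) is exactly the scheme of Section~\ref{sec:bs_upper}, and while you analyze it with a renormalized conditional estimator and the decomposition $\dtv{\p}{\hat\p}\le\dtv{\p_\partition}{\hat\p_\partition}+\sum_i\hat\p(\cX_i)\dtv{\p_i}{\hat\p_i}$ rather than the paper's single unbiased estimator~\eqref{eqn:pi_est} with a direct variance bound, both routes give the rate $O\Paren{\tfrac{1}{\eps}\sqrt{\sum_i k_i^2/n}}$ with $O(\log k)$ bits per user, so this half is sound.

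The gap is in the lower-bound packing. You perturb every non-pivot element of block $i$ by an independent sign and dump the compensating mass onto the pivot, so $\p_z(x_i^*)=\tfrac1k-\tfrac{\delta_i}{k}\sum_{x}z^{(i)}_x$. The binding issue is not really the TV radius (the pivot term can only increase $\dtv{\p_z}{\mathbf{u}}$) but nonnegativity: $|\sum_x z^{(i)}_x|$ is of order $\sqrt{k_i}$ for almost every $z$, while the pivot has only $1/k$ of mass to spare. With the optimized choice $\delta_i\asymp \alpha k k_i/\sum_j k_j^2$ this forces $\delta_i\sqrt{k_i}\le 1$, i.e.\ $\alpha\lesssim\sqrt{m/k}$ in the equal-block case (already $\alpha\lesssim 1/\sqrt k$ for $m=1$); for constant $\alpha$, the regime the theorem is stated for, almost all of your $\p_z$ have a negative pivot entry and $\cP_\cZ$ is not a family of distributions, so Lemma~\ref{lem:lower_bound} cannot be applied. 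The natural repairs each break another step: capping $\delta_i$ shrinks the packing radius and loses tightness, while restricting to sign vectors with $\sum_x z^{(i)}_x=0$ (or nearly so) destroys the coordinate independence you use to annihilate the cross terms in $\chi^2(Q\mid\cP_\cZ)$ and changes the counting of $|\cP_\cZ|$ and $C_\alpha$. The paper sidesteps all of this by perturbing the elements of each block in $\pm$ pairs, $p_z((j,2i-1))=\tfrac1k+\tfrac{2z_{j,i}k_j\alpha}{\sum_t k_t^2}$ and $p_z((j,2i))=\tfrac1k-\tfrac{2z_{j,i}k_j\alpha}{\sum_t k_t^2}$, so the mass cancels exactly within each pair, every $\p_z$ is valid for all $\alpha\le 1/2$, and $\dtv{\p_z}{\mathbf{u}}=\alpha$ deterministically. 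If you swap in that paired packing, the rest of your argument does go through: your per-block chi-square bound via the two-sided within-block constraint (a correct, and arguably more elementary, substitute for the paper's use of the inequality from~\cite{pmlr-v99-acharya19a} and the analogue of Claim~\ref{clm:norm_bound}) and the choice of perturbation magnitude proportional to $k_i$ recover $\Omega\Paren{\sum_j k_j^2/(\alpha^2\eps^2)}$.
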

In the special case when all the blocks have the same size $k/m$, the sample complexity is $\Theta(k^2/(m\alpha^2\eps^2))$, which saves a factor of $m$ compared classic LDP. In Section~\ref{sec:bs_upper}, we describe an algorithm based on HR that achieves this error. Moreover, it only uses $O(\log k)$ bits of communication from each user. A matching lower is proved bound in Section~\ref{sec:bs_lower_proof}, which shows that our algorithm is information theoretically optimal.

\subsection{Achievablity using a variant of  HR} \label{sec:bs_upper}
The idea of the algorithm is to perform Hadamard Response proposed in~\cite{AcharyaSZ19} within each block. Without loss of generality we assume each block $\cX_j = \{ (j,i) \mid  i \in [k_j]\}$. For each block $\cX_j, j\in [m]$, we associate a Hadamard matrix $H_{K_j}$ with $K_j = 2^{\ceil{\log (k_j+1)}} $. Let $\cY_j = \{(j, i) \mid  i \in [K_j]\}$. 
For each $x = (j,i) \in \cX_j$, we assign the $(i+1)$th row of $H_{K_j}$ to $x$. Define the set of locations of `$+1$'s at the $(i+1)$th row of $H_{K_j}$ to be $S_x$. Then the output domain is $\cY:=\cup_{j =1}^m \cY_j$. The privatization scheme is given as
\[
	Q(Y = (j,i)\mid  X) = 
	\begin{cases}
		\frac{2e^\eps }{K_j (1 + e^\eps)},& X \in \cX_j, i \in S_X, \\
		\frac{2}{K_j (1 + e^\eps)},& X \in \cX_j, i \notin S_X, \\
		0, &\text{elsewhere}.
	\end{cases} 
\]
It is easy to verify that this scheme satisfies the privacy constraints. 
\new{Next we construct estimators based on $Y_1, Y_2 \upto Y_n$ and state the performance in Proposition~\ref{prop:upper_bsldp}. Let $Y(1), Y(2)$ be the two coordinates of each output $Y$. For each $j \in [m]$ and $x \in \cX_j$, 
\begin{equation} \label{eqn:pi_est}
	\hat{\p_x} = \frac{2(e^\eps + 1)} {e^\eps - 1} \Paren{	\widehat{\p(S_x)}  - \frac{\widehat{\p(\cX_j)} }{2}},
\end{equation}
where
\begin{gather*}	
\widehat{\p(\cX_j)}  = \frac{1}{n} \sum_{t = 1}^{n} \mathbf{1}\{ Y_t(1) = j\}, \nonumber \\
\widehat{\p(S_x)} = \frac{1}{n} \sum_{t = 1}^{n} \mathbf{1}\{ Y_t(1) = j, Y_t(2) \in S_x\}. \nonumber
\end{gather*}
}
\begin{proposition} \label{prop:upper_bsldp}
	 Under the unbiased estimator $\hat{p}$ described in~\eqref{eqn:pi_est}, we have:
	\begin{align} \label{eqn:var_bound_bs}
		\expectation{ \ell_2^2(\hat{p}, \p)} \le \frac{12 \max_i k_i}{n} \Paren{\frac{e^\eps + 1} {e^\eps - 1} }^2, \nonumber \\ \expectation{\dtv{\hat{p}}{p}} \le  \frac{2(e^\eps + 1)} {e^\eps - 1}  \sqrt{ \frac{3\sum_{j = 1}^{m}  k_i^2}{n}  }.
	\end{align}
\end{proposition}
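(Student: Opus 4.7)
The plan is to first establish that $\hat{p}_x$ is an unbiased estimator of $\p_x$, then to bound $\mathrm{Var}(\hat{p}_x)$, and finally to sum these variances and use Cauchy–Schwarz block-wise to obtain the $\ell_2^2$ and TV bounds. The key structural feature of the scheme is that an input in $\cX_j$ is always mapped to $\cY_j$, so $\probof{Y(1) = j \mid X \in \cX_j} = 1$ and hence $\expectation{\widehat{\p(\cX_j)}} = \p(\cX_j)$ trivially.

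For the unbiasedness of $\widehat{\p(S_x)}$ with $x \in \cX_j$, I would invoke the Hadamard identities $|S_x| = K_j/2$ and $|S_x \cap S_{x'}| = K_j/4$ for distinct $x, x' \in \cX_j$ stated in the preliminaries to compute
\begin{equation*}
    \probof{Y(2) \in S_x \mid X = x'} = \begin{cases} \tfrac{e^\eps}{1+e^\eps}, & x' = x,\\ \tfrac{1}{2}, & x' \in \cX_j,\ x' \ne x. \end{cases}
\end{equation*}
Averaging over $\p$ gives $\expectation{\widehat{\p(S_x)}} = \tfrac{e^\eps - 1}{2(e^\eps + 1)}\,\p_x + \tfrac{\p(\cX_j)}{2}$, and solving for $\p_x$ recovers~\eqref{eqn:pi_est}.

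For the variance, I would rewrite the parenthesized quantity in~\eqref{eqn:pi_est} as $\tfrac{1}{n}\sum_t Z_t$, where $Z_t := \mathbf{1}\{Y_t(1)=j,\ Y_t(2)\in S_x\} - \tfrac{1}{2}\mathbf{1}\{Y_t(1)=j\}$. Each $Z_t$ vanishes off the event $\{Y_t(1)=j\}$ and takes values in $\{\pm 1/2\}$ on it, so $Z_t^2 \le \tfrac{1}{4}\mathbf{1}\{Y_t(1)=j\}$ and $\mathrm{Var}(Z_t) \le \p(\cX_j)/4$. Independence across users then gives $\mathrm{Var}(\hat{p}_x) \le \bigl(\tfrac{e^\eps+1}{e^\eps-1}\bigr)^2 \p(\cX_{\partition(x)})/n$ after multiplying by the squared leading coefficient. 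Summing over $x \in \cX$ and using $\sum_j k_j\,\p(\cX_j) \le \max_j k_j$ produces the $\ell_2^2$ bound; the factor $12$ absorbs the looseness in $\mathrm{Var}(Z_t) \le \expectation{Z_t^2}$.

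For the TV bound, the crucial trick is to apply Cauchy–Schwarz \emph{per block} rather than on the whole alphabet:
\begin{equation*}
    2\,\dtv{\hat{p}}{\p} = \sum_{j=1}^{m}\sum_{x\in \cX_j} |\hat{p}_x - \p_x| \le \sum_{j=1}^{m} \sqrt{k_j \sum_{x\in \cX_j} (\hat{p}_x - \p_x)^2}.
\end{equation*}
Taking expectations, pulling expectation inside the square root via Jensen, substituting the per-block variance bound, and applying a second Cauchy–Schwarz $\sum_j k_j\sqrt{\p(\cX_j)} \le \sqrt{\sum_j k_j^2}$ (using $\sum_j \p(\cX_j)=1$) produces the claimed $\sqrt{\sum_j k_j^2/n}$ scaling. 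The only real obstacle is tracking numerical constants to respect the factors $12$ and $\sqrt{3}$; structurally, the proof is a block-wise HR variance computation combined with two Cauchy–Schwarz inequalities, the nontrivial feature being the block-wise application of the first.
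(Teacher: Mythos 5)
Your proposal is correct and follows essentially the same route as the paper's proof: unbiasedness via the Hadamard row-intersection sizes, a per-symbol variance bound proportional to $\p(\cX_j)/n$, then the block-wise Cauchy--Schwarz step followed by a second Cauchy--Schwarz over blocks using $\sum_j \p(\cX_j)=1$. The only (minor) divergence is in the variance step, where your single combined indicator $Z_t$ exploits the nesting $\mathbf{1}\{Y_t(2)\in S_x\}\le \mathbf{1}\{Y_t(1)=j\}$ and yields a slightly sharper constant than the paper's $2\,\mathrm{Var}(\widehat{\p(S_x)})+\tfrac12\,\mathrm{Var}(\widehat{\p(\cX_j)})$ split, so it comfortably implies the stated bounds.
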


Since $\frac{e^\eps+1}{e^\eps-1} = O\Paren{\frac1\eps}$ when $\eps = O(1)$, we get desired bounds in Theorem~\ref{thm:bs_complexity}. For the proof of~\eqref{eqn:var_bound_bs}, see Section~\ref{sec:block_upper_proof}. We note here that our algorithm also gives the optimal bound in terms of $\ell_2$ distance. A matching lower bound can shown using well established results~\cite{duchi2013local} on LDP by considering the maximum of expected loss if we put all the mass on each single block.
\subsection{Lower bound} \label{sec:bs_lower_proof}

We now prove the lower bound part of Theorem~\ref{thm:bs_complexity}. The general idea is similar to the proof in Section~\ref{sec:hl_lower_proof}, which is based on Lemma~\ref{lem:lower_bound}. Without loss of generality, we assume all the $k_i$'s are even numbers \footnote{\new{If one of the $k_i$'s is odd, we can remove one element from $\cX_j$, which will make the problem simpler and the sample complexity remains unchanged up to constant factors}}. We construct a family of distributions as following: Let $\cZ = \cZ_1 \times \cZ_2 \times \cdots \times \cZ_m$ and $\forall j \in [m], \cZ_j = \{+1, -1\}^\frac{k_j}{2}$. $\forall z \in \cZ$, we denote the $j$th entry of $z$ as $z_j$ where $z_j \in \cZ_j$. Define  $z_{j,i}$ to be the $i$th bit of $z_j$. $\forall j \in [m]$ and $i \in [k_j/2]$, we have
\vspace{-5pt}
\begin{gather*}
	p_z((j, 2i-1)) = \frac{1}{k} + \frac{2 z_{j,i} k_j \alpha}{\sum_{t = 1}^{m}k_i^2}, \nonumber \\ p_z((j, 2i)) = \frac{1}{k} - \frac{2 z_{j,i} k_j \alpha}{\sum_{t = 1}^{m}k_i^2}.
\end{gather*}
Note that $\forall z \in \cZ$, 
$d_{\rm TV}({p_z}, u) = \alpha.$ Moreover, 
$|\cP| = 2^{\frac{k}{2}}$ and $|C_\alpha| \le 2^{\frac{k}{2} h(1/3)}$ where $h$ is the binary entropy function. By Lemma~\ref{lem:lower_bound}, let $\cQ$ be the set of channels that satisfy $(\mathbb{P}, \eps)$-LDP and $\cP_\cZ = \{\p_z\mid  z\in \cZ\}$, it would suffice to show that
\begin{equation} \label{eqn:bound_chi_bs}
    \max_{Q \in \cW} \chi^2(Q \mid \cP_\cZ) = O \Paren{\frac{k \alpha^2 \eps^2 }{\sum_{i = 1}^m k_i^2}}.
\end{equation}
The proof of~\eqref{eqn:bound_chi_bs} is technical and presented in Section~\ref{sec:bound_chi_bs}.

\vspace{-3pt}
\section{Experiments} \label{sec:experiments}
We perform experiments on both synthetic data and real data to empirically validate how the new notion of context-aware LDP and associated algorithms would affect the accuracy of $k$-ary distribution estimation. Specifically, we choose the special case of block-structured LDP. 

For synthetic data, we set $k = 1000, \eps = 1$. We assume all the blocks to have the same size and $m \in \{10, 20 ,50 ,100\}$. For traditional LDP, we use Hadamard Response~\cite{AcharyaSZ19}, one of the state-of-the-art sample-optimal algorithms. We take $n = 1000 \times 2^i, i \in \{0, 1, \cdots, 9\} $ and generate samples from the following three distributions: Geometric distribution $Geo(\lambda)$ where $\p(i) \propto (1-\lambda)^i \lambda$, Zipf distribution $Zipf(\lambda)$ where $\p(i) \propto (i+1)^{-\lambda}$ and uniform distribution. We assume the blocks are partitioned based on their indices (the first block is $\{0, 1, \cdots, k/m -1\}$, the second is $\{k/m, k/m+1, \cdots, 2k/m -1\}$ and so on). For Geometric distribution, we consider another case where we permute the mass function over the symbols to spread the heavy elements into multiple blocks, denoted by $Geo(\lambda)^*$. The results are shown in Figure~\ref{fig:synthetic} and each point is the average of 10 repetitions of the same experiment. We can see that we get consistently better accuracy under the notion of block-structured LDP compared to the classical notion. Moreover, the larger $m$ we have, the better accuracy we get, which is consistent with our analysis.
\begin{table}[b!]
\centering
\begin{tabular}{lllll}
\hline
\multicolumn{1}{|c|}{$(m_1, m_2)$}   & \multicolumn{1}{c|}{LDP}   & \multicolumn{1}{c|}{(5,7)} & \multicolumn{1}{c|}{(25, 35)} & \multicolumn{1}{c|}{(25, 70)} \\ \hline
\multicolumn{1}{|c|}{$d_{TV}$-error} & \multicolumn{1}{c|}{0.591} & \multicolumn{1}{c|}{0.298} & \multicolumn{1}{c|}{0.108}    & \multicolumn{1}{c|}{0.082}    \\ \hline
\end{tabular}
\caption{$d_{TV}$ estimation error under different $(m_1, m_2)$ pairs.}
\label{tab:gowalla}
\end{table}

\begin{figure}
		\centering
		\subfigure[	{{\small Uniform}} ]{\includegraphics[width=0.4\textwidth]{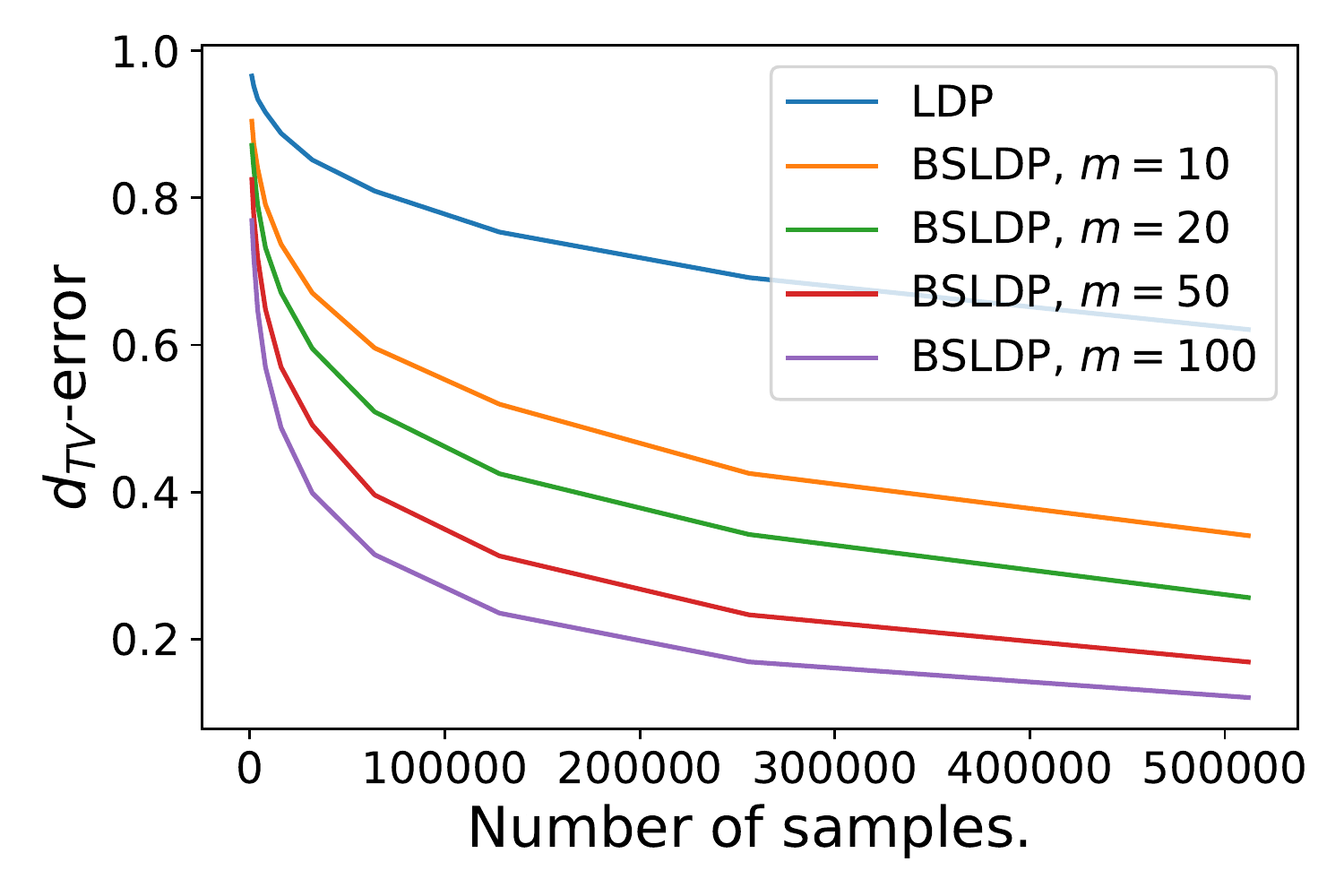}\label{fig:k_1000_eps_05}}
		\subfigure[	{{\small Geo(0.95)}}
		]{\includegraphics[width=0.4\textwidth]{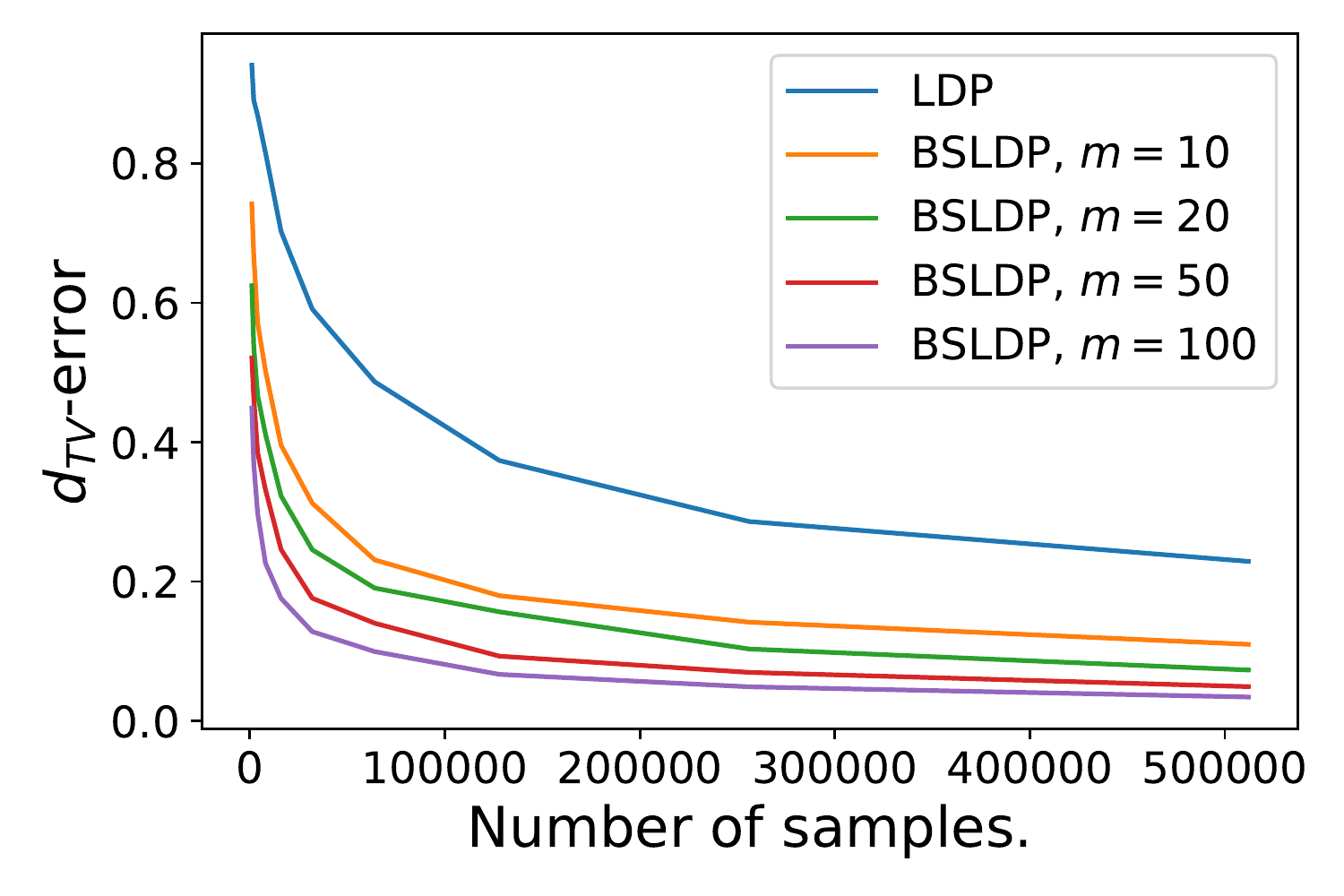}\label{fig:k_1000_eps_2}}
		\subfigure[	{{\small Zipf(1)}} ]{\includegraphics[width=0.4\textwidth]{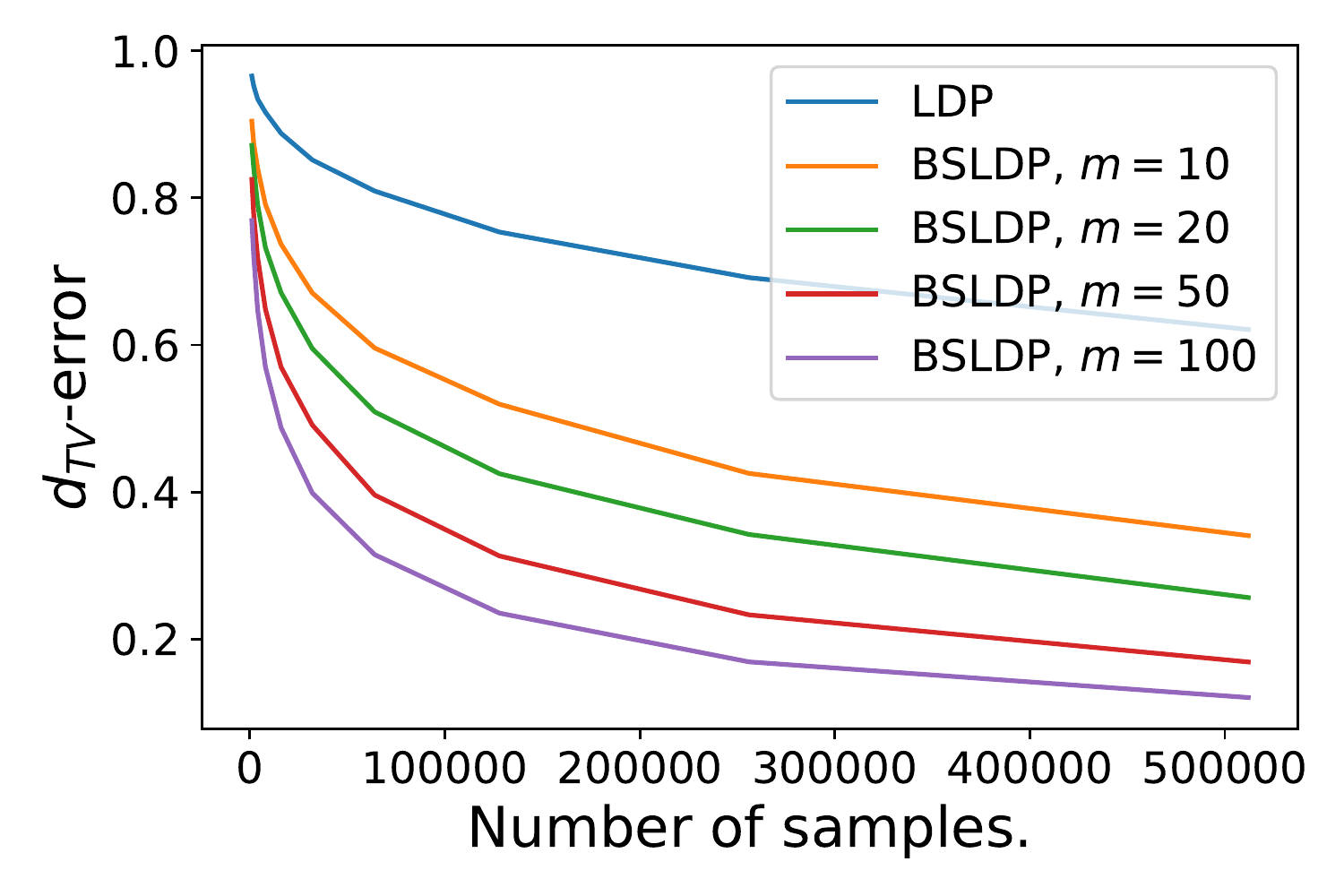}\label{fig:k_1000_eps_5}}
		\subfigure[	{{\small Geo(0.95)$^*$}} ]{\includegraphics[width=0.4\textwidth]{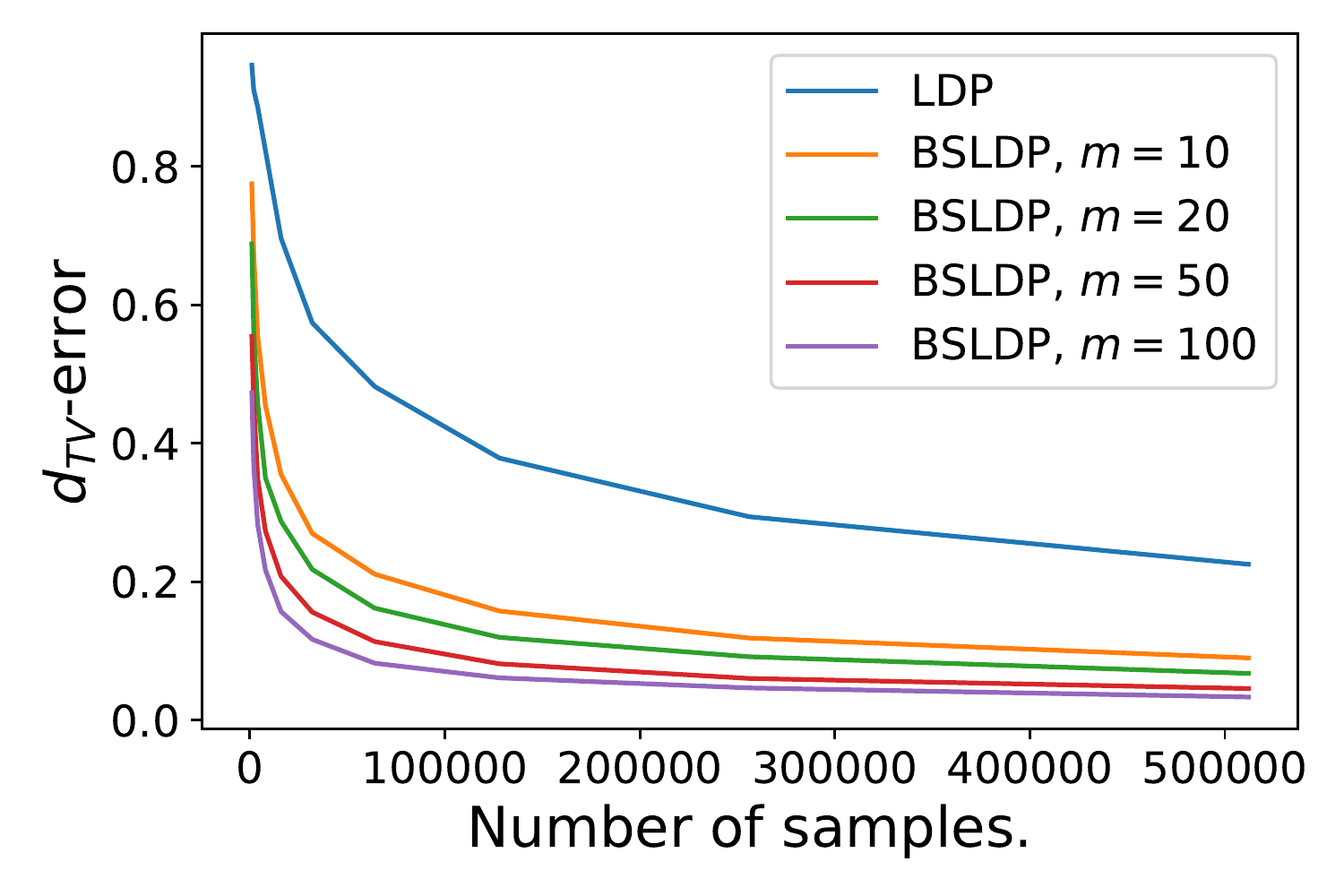}\label{fig:k_1000_eps_7}}
		\caption[ ]
		{\small $d_{TV}$-error comparison under different distributions}
		\label{fig:synthetic}
\end{figure}
To validate our algorithm on real datasets, we take the Gowalla user check-in dataset~\cite{cho2011friendship}, which consists of user check-in records with location information (latitudes and longitudes) on the Gowalla website. We take 3671812 check-in records with locations within 25N and 50N for latitude and 130W and 60W for longitude (mostly within continental US). We round the latitude and longitude for each record up to accuracy 0.2 and regard records with the same latitude and longitude as the same location. By doing so, we get a dataset with $43,750$ possible values. \new{Figure~\ref{fig:heatmap} shows the empirical distribution of the check-in records of the dataset.} We take the empirical distribution of the records as our ground truth and try to estimate it while preserving the privacy of each record. We partition latitudes into $m_1$ equal parts and longitudes into $m_2$ equal parts. The resulting grid will be used as the blocks ($m_1m_2$ blocks in total). Table~\ref{tab:gowalla} shows the average $d_{TV}$ error over 100 runs of the experiment for LDP and BS-LDP with different $(m_1, m_2)$ pairs. From the table we can see that by switching to block-structured LDP, we can get more meaningful estimation accuracy compared to classical LDP.
\begin{figure}
		\centering
	    \includegraphics[width=0.7\textwidth]{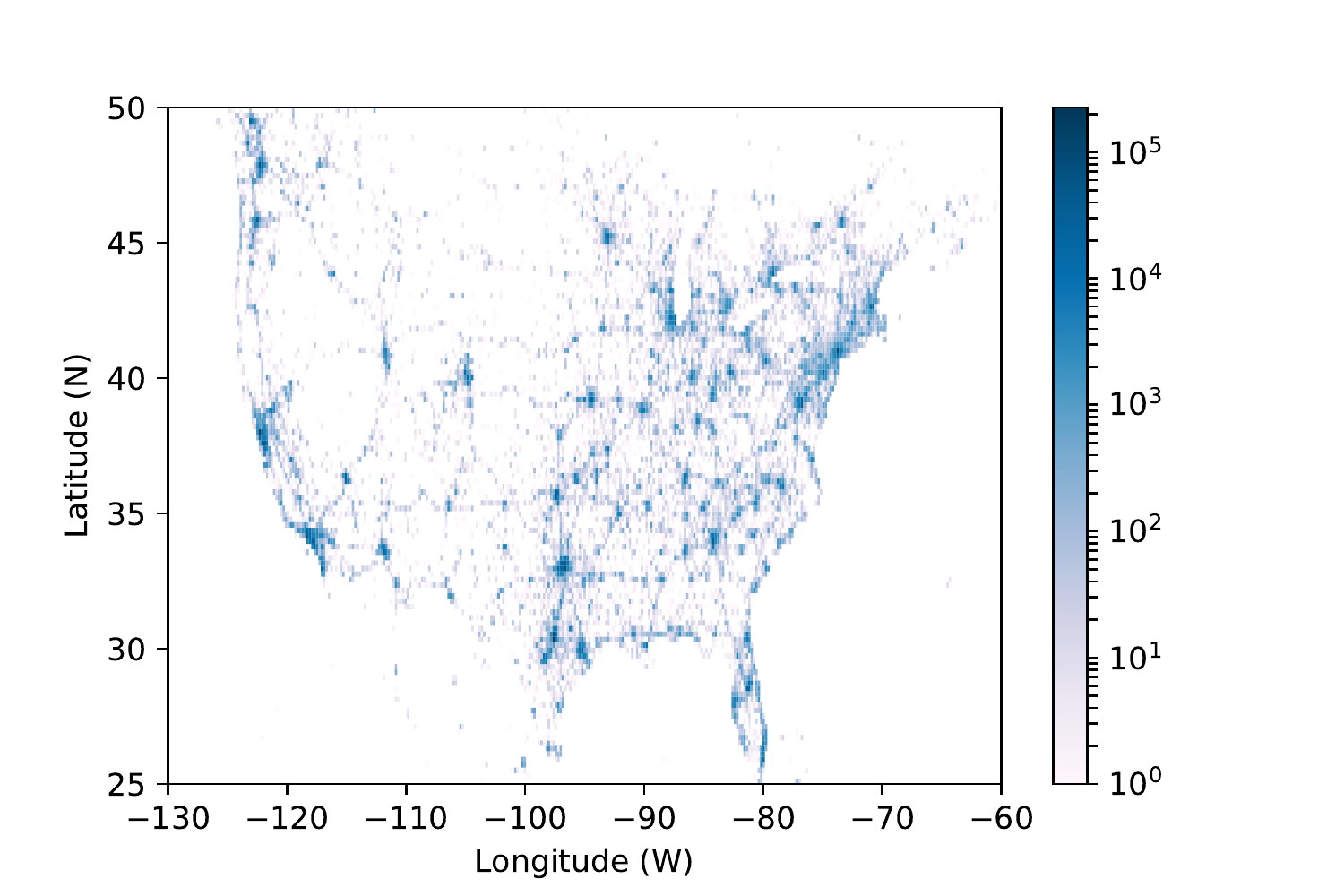}
		\caption[ ]
		{\small Heatmap (distribution) of the check-in records.}   
		\label{fig:heatmap}
\end{figure}

\vspace{-5pt}
\section{Conclusion}
\label{sec:conc}
We presented a unifying context-aware LDP framework and investigated communication, privacy, and utility trade-offs under both binary and $k$-ary data domains for minimax distribution estimation. Our theory and experiments on synthetic and real-world data suggest that context-aware LDP leads to substantial utility gains compared to vanilla LDP. In order to examine the effect of the number of data partitions in block-structured LDP, our experiments focused on uniform partitioning of geo-location data and examined the utility gains and various partition sizes. In practice however, non-uniform partitioning can better reflect the different topologies of cities. Thus, more careful experiments with non-uniform partitions need to performed to better quantify the utility gains. More broadly, more experiments should be conducted to verify that the gains we see on the Gowalla dataset are also applicable in other data domains.

\newpage
\bibliographystyle{IEEEtran}
\bibliography{masterref}

\newpage
\appendix

\section{Proof of Properties} \label{sec:proof_properties}
In this section, we prove several properties of Context-Aware LDP stated in Section~\ref{sec:context}.

\subsection{Proof of Proposition~\ref{prop:post-processing}} 
    $\forall x_1, x_2 \in \cX$ and $y \in \cY_2$, we have:
    \begin{align*}
       \frac{\probof{\cA(x_1) = y}}{\probof{\cA(x_2) = y}} = \frac{\sum_{y' \in \cY_1} \probof{\cA_1(x_1) = y'} \probof{\cA_2(y') = y} }{\sum_{y' \in \cY_1} \probof{\cA_1(x_2) = y'} \probof{\cA_2(y') = y}} 
    \le  \max_{y' \in \cY_1} \frac{\probof{\cA_1(x_1) = y'}}{\probof{\cA_1(x_2) = y'}} \le e^{\eps_{1,2}}
    \end{align*}

\subsection{Proof of Proposition~\ref{prop:adapt_compose}} 
    $\forall x_1, x_2 \in \cX$ and $y_1 \in \cY_1$, $y_2 \in \cY_2$, we have:
    \begin{align*}
       \frac{\probof{\cA_1(x_1) = y_1, \cA_2(x_1, y_1) = y_2}}{\probof{\cA_1(x_2) = y_1, \cA_2(x_2, y_1) = y_2}}
    = \frac{\probof{\cA_1(x_1) = y_1} \probof{\cA_2(x_1, y_1) = y_2}}{\probof{\cA_1(x_2) = y_1}\probof{\cA_2(x_2, y_1) = y_2}}
    \le  e^{\eps^{(1)}_{1,2}+\eps^{(2)}_{1,2}}
    \end{align*}

\subsection{Proof of Proposition~\ref{prop:side_information}} 
\[
    \frac{\probof{X = x_1| Y = y}}{\probof{X = x_2| Y = y}} = \frac{\probof{Y = y | X = x_1}\probof{X = x_1}}{\probof{Y = y | X = x_2}\probof{X = x_1}} \le e^{\eps_{1,2}} \frac{\p^*(x_1)}{\p^*(x_2)}.
\]
    
\subsection{Proof of Theorem~\ref{thm:operational_def_gdp}} \label{sec:operational_def_gdp}
Let $Q$ be an $\Epsilon$-LDP scheme from $\cX$ to $\cY$. Note that $\X-\Y-\hat{\X}$ form a Markov chain. 
\begin{align}
P_{\rm MD}(x,x') &= \probof{\hat{\X} = x' | \X = x} \nonumber \\
	& = \sum_{y \in \cY} \probof{\hat{\X} = x' | \Y = y} \probof{Y = y | \X = x}\nonumber \\
	& = \sum_{y \in \cY} \probof{\hat{\X} = x' | \Y = y} Q({y | x})\nonumber \\
	& \ge \sum_{y \in \cY} {\probof{\hat{\X} = x' | \Y = y} } Q({y | x'}) e^{-\eps_{x',x}} \nonumber \\
	& = e^{-\eps_{x',x}}  \probof{\hat{\X} = x' | \X = x'}  \nonumber\\
	& = e^{-\eps_{x',x}} (1 - P_{\rm FA}(x,x')). \nonumber 
\end{align}
Rearranging the terms gives~\eqref{eqn:famd}.~\eqref{eqn:mdfa} can be obtained similarly starting with $P_{\rm FA}(x,x')$.

Now for the other direction, consider a decision rule that satisfies~\eqref{eqn:famd} and~\eqref{eqn:mdfa}. For any $x, x' \in \cX$ and $S \subset \cY$, consider a decision rule that outputs $\hat{\X} = x$ if $Y \in S$ and $\hat{\X} = x'$ otherwise. Then we get,
\begin{align}
\frac{Q(S|x)}{Q(S|x')} = \frac{\probof{\hat{\X} = x | X = x}}{\probof{\hat{\X} = x | X = x'}} = \frac{1 - P_{\rm MD}(x,x')}{P_{\rm FA}(x,x')}, \nonumber
\end{align}
which is bounded by $e^{\eps_{x,x'}}$ according to~\eqref{eqn:mdfa}, showing that the scheme is $\Epsilon$-LDP.

\section{Proof of Theorem~\ref{thm:optimal_binary}}

In Section~\ref{sec:operational}, we give the operational meaning of context-aware LDP in Theorem~\ref{thm:operational_def_gdp}. For binary alphabet, if $Y$ is generated using an $E$-LDP mechanism, the error region for all binary hypothesis testing rule $\hat{\X}: \cY \rightarrow \{1, 2\}$, denoted by $\cR_{\eps_{1,2}, \eps_{2,1}}$ can be expressed by the convex hull defined by the following three points:
\begin{align} \label{eqn:err_vertex_1}
    (P_{\rm FA}^1, P_{\rm MD}^1) = (1, 0), & &(P_{\rm FA}^2, P_{\rm MD}^2) = (0, 1), 
\end{align}
\begin{align}\label{eqn:err_vertex_2}
    (P_{\rm FA}^3, P_{\rm MD}^3) = (\frac{e^{\eps_{2,1}} -1}{e^{\eps_{1,2}+\eps_{2,1}}-1}, 
    \frac{e^{\eps_{2,1}} -1}{e^{\eps_{1,2}+\eps_{2,1}}-1}),
\end{align}
where $P_{\rm FA} = \probof{\hat{X} = 1 | X = 2}$ and $P_{\rm MD} = \probof{\hat{X} = 2 | X = 1}$.

Next we show if we use the scheme expressed in~\eqref{eqn:binary_optimal}, the error region $\cR_{Q^*} = \cR_{\eps_{1,2}, \eps_{2,1}}$. By Theorem~1, we know $\cR_{Q^*} \subset \cR_{\eps_{1,2}, \eps_{2,1}}$. Hence we only need to show the reverse direction. More specifically, we only need to show the three vertices expressed in~\eqref{eqn:err_vertex_1} and~\eqref{eqn:err_vertex_2} can be achieved. The two vertices in~\eqref{eqn:err_vertex_1} can be achieved by trivial rules $\hat{X} = 1$ and $\hat{X} = 2$. Next we show the decision rule
\[
    \hat{X}(Y) = Y.
\]
achieves the error in~\eqref{eqn:err_vertex_2}. Using this decision rule, we have:
\[
    P_{\rm FA} = \probof{\hat{X} = 1 | X = 2} = \frac{e^{-\varepsilon_{1,2}}\left(e^{\varepsilon_{2,1}}-1\right)}{e^{\varepsilon_{2,1}}-e^{-\varepsilon_{1,2}}} = \frac{e^{\eps_{2,1}} -1}{e^{\eps_{1,2}+\eps_{2,1}}-1},
\]
\[
    P_{\rm MD} = \probof{\hat{X} = 2 | X = 1} = \frac{1-e^{-\varepsilon_{1,2}}}{e^{\varepsilon_{2,1}}-e^{-\varepsilon_{1,2}}} = \frac{e^{\eps_{1,2}} -1}{e^{\eps_{1,2}+\eps_{2,1}}-1},
\]
which completes the proof.

For any other scheme $Q \in \cQ_\eps$, we have $\cR_Q \subset \cR_{\eps_{1,2}, \eps_{2,1}} = \cR_{Q^*}$. Hence by Theorem 20 in~\cite{kairouz2014extremal}, we have if $Y_Q$ and $Y_{Q^*}$ are the outputs of schemes $Q$ and $Q^*$ respectively, there exists a coupling between $Y_Q$ and $Y_{Q^*}$ such that $\X - \Y_{Q^*} - \Y_{Q}$ forms a Markov chain, which, by data processing inequality, implies
\[
    U(Q) \le U(Q^*).
\]
\section{Upper Bound Proofs}
\subsection{Estimator for the high-low model} \label{sec:hl_estimator}
In this section, for the high-low model, we show how the output distribution is related to the input distribution and construct an estimator based on them. For all $i \in [s]$, define set $S_i = \{y | y \in [S], H_S(i+1,y) = +1\}$. Then using properties of Hadamard matrices in Section~\ref{sec:preliminaries},

\begin{align}
	\p([S]) &:= \frac{2}{e^\eps + 1}  + \frac{e^\eps - 1}{e^\eps + 1} \probof{x \in A}.\label{eqn:pS} \\
	\p(S_i) &:=  \probof{y \in S_i} = \sum_{x \in [k]} \probof{Y \in S_i | X = x}  p_x\nonumber\\
	&=p_i |S_i| \frac{2 e^\eps}{S(e^\eps + 1)} + \sum_{x \in A, x \neq i} p_x\Paren{|S_i \cap S_x| \frac{2e^\eps}{S(e^\eps + 1)} + |S_i \cap S_x^c| \frac{2}{S(e^\eps + 1)}} + \sum_{x \notin A} p_x |S_i| \frac{2}{S(e^\eps + 1)}  \nonumber \\
	&=\frac{1}{e^\eps + 1} + \frac{e^\eps - 1}{2(e^\eps + 1)} \probof{x \in A} +  \frac{e^\eps - 1}{2(e^\eps + 1)} p_i.  \label{eqn:pAi} 
\end{align}

Once we observe $Y_1, Y_2 \upto Y_n$, we can get the following unbiased empirical estimates: for all $i \in [s]$,
\begin{align}
	\widehat{\p(S_i)} =  \frac{1}{n} \left(\sum_{m = 1}^{n} \mathbf{1}\{Y_m \in S_i\}\right), \widehat{\p([S])} = \frac{1}{n} \left(\sum_{m = 1}^{n} \mathbf{1}\{Y_m \in S\}\right). \nonumber
\end{align}
Our estimates for these $\p_i$'s will be
\begin{align*} 
	\widehat{\p_i} = \frac{2(e^\eps + 1)}{e^\eps - 1} \Paren{ \widehat{\p(S_i)} -\frac{1}{e^\eps + 1} } - \widehat{\p(A)},  \text{ where } \widehat{\p(A)} =  \frac{e^\eps + 1}{e^\eps - 1}  \Paren{\widehat{\p([S])} - \frac{2}{e^\eps + 1}}.
\end{align*}

For all $i \notin [s]$, we simply use the empirical estimates
\begin{equation*} 
	\widehat{\p_i}  = \frac{e^\eps + 1}{e^\eps - 1} \frac{1}{n} \left(\sum_{m = 1}^{n} \mathbf{1}\{Y_m = i + S -s\}\right).
\end{equation*}

\subsection{Error bound proof for the high-low model (Proposition~\ref{prop:upper_hlldp})} \label{sec:hl_upper_proof}
In this section, we bound both the expected $\ell_1$ risk and $\ell_2$ risk of the estimator proposed in~\eqref{eqn:ps_est} and~\eqref{eqn:pns_est}.
\begin{align}
	\expectation{\ell_2^2(\hat{p}, p) }& = \sum_{i= 1}^{k}\variance{\widehat{\p_i}} = \sum_{i= 1}^{s}\variance{\widehat{\p_i}} + \sum_{i= s+1}^{k}\variance{\widehat{\p_i}} \nonumber \\
	& \le \sum_{i= 1}^{s} [2 \Paren{\frac{2(e^\eps + 1)}{e^\eps - 1} }^2 \variance{\widehat{\p(S_i)}} + 2 \variance{\widehat{\p(A)}} ]+  \sum_{i= s+1}^{k}\variance{\widehat{\p_i}} \nonumber \\
	& = \sum_{i= 1}^{s} [2 \Paren{\frac{2(e^\eps + 1)}{e^\eps - 1} }^2 \frac{\p(S_i)(1-\p(S_i))}{n} + 2 \frac{\p(A)(1-\p(A))}{n} ]+  \sum_{i= s+1}^{k}\variance{\widehat{\p_i}} \nonumber \\
	& \le \frac{1}{n} \Paren{ \sum_{i= 1}^{s}  3 \Paren{\frac{e^\eps + 1}{e^\eps - 1} }^2 + \sum_{i= s+1}^{k} \Paren{\frac{e^\eps + 1}{e^\eps - 1}}^2 \frac{e^\eps - 1}{e^\eps + 1} \p_i(1- \frac{e^\eps - 1}{e^\eps + 1} \p_i)} \nonumber \\
	& \le \frac{1}{n} \Paren{ 3s \Paren{\frac{e^\eps + 1}{e^\eps - 1} }^2  + \frac{e^\eps + 1}{e^\eps - 1}}.
\end{align}

Similarly, we get
\begin{align}
	\expectation{\ell_1(\hat{p}, p) }& \le \sqrt{s\sum_{i= 1}^{s}\variance{\widehat{\p_i}} }+ \sqrt{k\sum_{i= s+1}^{k}\variance{\widehat{\p_i}}} \nonumber \\
	& \le \sqrt{\frac{3s^2}{n} \Paren{\frac{e^\eps + 1}{e^\eps - 1} }^2 } + \sqrt{\frac{e^\eps + 1}{e^\eps - 1} \frac{k}{n}}.
\end{align}

\subsection{Estimator for the block-structured model} \label{sec:bs_estimator}
In this section, for the block-structured model, we show how the output distribution is related to the input distribution and construct an estimator based on them. Let $Y(1), Y(2)$ be the two coordinates of each output $Y$. Then for each block $\cX_j$ and $x \in \cX_j$, define
\begin{align} 
\p(\cX_j) := \probof{X \in \cX_j}, && \p(S_x) := \probof{Y(1) = j, Y(2) \in S_x}. \nonumber
\end{align}
Using properties of Hadamard matrices in Section~\ref{sec:preliminaries}, $\forall j \in [m]$ and $x \in \cX_j$,
\begin{align} \label{eqn:pSi}
	\p(S_x) &= \probof{X \in \cX_j, X \neq x} \Paren{|S_x \cap S_X| \frac{2e^\eps }{K_j (1 + e^\eps)} + |S_x \cap S_X^c| \frac{2}{K_j (1 + e^\eps)}} +  \p_x \frac{2e^\eps }{K_j (1 + e^\eps)} |S_x| \nonumber \\ 
	& = \frac{\p(\cX_j)}{2} + \frac{e^\eps - 1}{2(e^\eps + 1)} \p_x
\end{align}
By observing $Y_1, Y_2 \upto Y_n$, we obtain empirical estimates for $\p(\cX_j)$ and $\p(S_x)$ as following. For each $j \in [m]$ and $x \in \cX_j$, 
\begin{gather*}	
\widehat{\p(\cX_j)}  = \frac{1}{n} \sum_{t = 1}^{n} \mathbf{1}\{ Y_t(1) = j\}, \nonumber \\
\widehat{\p(S_x)} = \frac{1}{n} \sum_{t = 1}^{n} \mathbf{1}\{ Y_t(1) = j, Y_t(2) \in S_x\}. \nonumber
\end{gather*}
Then from~\eqref{eqn:pSi}, 
\begin{equation} 
	\hat{\p_x} = \frac{2(e^\eps + 1)} {e^\eps - 1} \Paren{	\widehat{\p(S_x)}  - \frac{\widehat{\p(\cX_j)} }{2}} \nonumber
\end{equation}
is an unbiased estimate for $\p_x$'s.

\subsection{Error bound proof for the block-structured model (Proposeition~\ref{prop:upper_bsldp})} \label{sec:block_upper_proof}
In this section, we bound both the expected $\ell_1$ risk and $\ell_2$ risk of the estimator proposed in~\eqref{eqn:pi_est}.
\begin{align}
	\expectation{ \ell_2^2(\hat{p}, \p)} & = \sum_{x \in \cX} \variance{\hat{\p_x} } = \sum_{j = 1}^{m} \sum_{x \in \cX_j} \variance{\hat{\p_x} } \nonumber \\
	& \le  \sum_{j = 1}^{m} \sum_{x \in \cX_j} \Paren{\frac{2(e^\eps + 1)} {e^\eps - 1} }^2 \Paren{ 2 \variance{ \widehat{\p(S_x)} } + \frac{1}{2} \variance{ \widehat{\p(\cX_j)}} } \nonumber \\
	& = \Paren{\frac{2(e^\eps + 1)} {e^\eps - 1} }^2 \sum_{j = 1}^{m} \sum_{x \in \cX_j}  \frac{1}{n} \Paren{ 2\p(S_x) (1- \p(S_x)) + \frac{1}{2} \p(\cX_j) (1 - \p(\cX_j))} \nonumber \\
	& \le \Paren{\frac{2(e^\eps + 1)} {e^\eps - 1} }^2 \sum_{j = 1}^{m} \sum_{x \in \cX_j}  \frac{3}{n} \p(\cX_j) \nonumber \\
	& \le \frac{12}{n} \Paren{\frac{e^\eps + 1} {e^\eps - 1} }^2 \sum_{j = 1}^{m} k_j\p(\cX_j) \nonumber \\
	& \le \frac{12 \max_j k_j}{n} \Paren{\frac{e^\eps + 1} {e^\eps - 1} }^2.   \nonumber 
\end{align}

For $\ell_1$ error, using similar steps, we get:
\begin{align}
	\expectation{\ell_1(p, \hat{p})} & = \sum_{j = 1}^{m} \sum_{x \in \cX_j} |\p_x - \hat{\p_x}| \le  \sum_{j = 1}^{m} \sqrt{ k_j \sum_{x \in \cX_j}\variance{\hat{\p_x} }} \le \sum_{j = 1}^{m} \frac{2(e^\eps + 1)} {e^\eps - 1}  \sqrt{ \frac{3k_j^2}{n} \p(\cX_j)  }  \nonumber \\
&\le  \frac{2(e^\eps + 1)} {e^\eps - 1}  \sqrt{ \frac{3\sum_{j = 1}^{m}  k_j^2}{n} \sum_{j = 1}^{m}\p(\cX_j)} \nonumber \\
& = \frac{2(e^\eps + 1)} {e^\eps - 1}  \sqrt{ \frac{3\sum_{j = 1}^{m}  k_j^2}{n} }.
\end{align}
The second last inequality comes from Cauchy-Schwarz inequality.
\section{Lower Bound Proofs}
\subsection{Proof of Claim~\ref{clm:norm_bound}} \label{sec:proof_chi_bound}
By definition, we have
\begin{align}
	& \sum_{y\in\cY} \sum_{i=1}^{k'} \frac{(Q(y|s + i)-Q(y|1))^2}{\sum_{1\le i'\le \ab}Q(y|i')}  \le \sum_{y\in\cY} \sum_{i=1}^{k'} \frac{(Q(y|s + i)-Q(y|1))^2}{\sum_{1\le i'\le \ab' }Q(y|s + i') + Q(y|1)}  \nonumber \\
	= & \sum_{y\in\cY} \Paren{ \frac{\sum_{i=1}^{k'} Q(y|s+i)^2 + Q(y|1)^2}{\sum_{1\le i'\le \ab' }Q(y|s + i') + Q(y|1)} + \frac{\sum_{i=1}^{k'}  Q(y|1) (Q(y|1)- Q(y|s+i) )}{\sum_{1\le i'\le \ab' }Q(y|s + i') + Q(y|1)} - Q(y|1) } \nonumber \\
	\le &\sum_{y\in\cY}  \Paren{ \max_i Q(y|i)  +\frac{ \sum_{i=2}^{k'}  Q(y|1) (e^{\eps}-1) Q(y|s + i) }{\sum_{\le i'\le \ab'}Q(y|s + i') + Q(y|1)} - Q(y|1) } \nonumber \\ 
	\le &\sum_{y\in\cY} \Paren{ \max_i Q(y|i) + Q(y|1) (e^{\eps}-1) - Q(y|1) }. \label{eqn:three}
\end{align}

To proceed, we need the following lemma.
\begin{lemma} \label{lem:subset}
	For all set $M \subset \cY$, $\forall i \in [k]$, we have:
	\[
	\sum_{y \in M} Q(y|i) \le 1 - e^{-\eps} \Paren{1- \sum_{y \in M} Q(y|1)}.
	\]
\end{lemma}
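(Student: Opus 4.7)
The plan is to reduce the statement to an application of the privacy constraint on the complement of $M$. Recall that in the setting of Claim~\ref{clm:norm_bound}, we are working with the high-low model where index $1$ is a sensitive element, so the $(A,\eps)$-HLLDP constraint gives the pointwise inequality $Q(y \mid 1) \le e^{\eps} Q(y \mid i)$ for every $y \in \cY$ and every $i$ in the relevant range. Equivalently, $Q(y \mid i) \ge e^{-\eps} Q(y \mid 1)$ pointwise.

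First, I would rewrite the target inequality in terms of the complement $M^c := \cY \setminus M$. Since $Q(\cdot \mid i)$ and $Q(\cdot \mid 1)$ are probability distributions on $\cY$, the inequality
\[
\sum_{y \in M} Q(y \mid i) \;\le\; 1 - e^{-\eps}\Paren{1 - \sum_{y \in M} Q(y \mid 1)}
\]
is equivalent, after subtracting both sides from $1$, to
\[
\sum_{y \in M^c} Q(y \mid i) \;\ge\; e^{-\eps} \sum_{y \in M^c} Q(y \mid 1).
\]
Then I would prove this complementary form by summing the pointwise privacy bound $Q(y \mid i) \ge e^{-\eps} Q(y \mid 1)$ over $y \in M^c$. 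This single step closes the argument, and undoing the complement substitution recovers the statement of the lemma.

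There is no real obstacle here; the only thing to be careful about is the direction of the privacy inequality. Because $1 \in A$ is sensitive, the HLLDP definition bounds $Q(\cdot \mid 1)$ from above by $e^{\eps} Q(\cdot \mid i)$ (and not the other way around in general), which is exactly what is needed to get a lower bound on $Q(\cdot \mid i)$ in terms of $Q(\cdot \mid 1)$ and hence an upper bound on the mass that $Q(\cdot \mid i)$ places on $M$.
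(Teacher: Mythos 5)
Your proposal is correct and is essentially the paper's own argument: the paper also passes to the complement $M^c$, applies the pointwise bound $Q(y\mid i)\ge e^{-\eps}Q(y\mid 1)$ (valid since $1\in A$ is sensitive under HLLDP) summed over $M^c$, and rewrites $\sum_{y\in M^c}Q(y\mid 1)=1-\sum_{y\in M}Q(y\mid 1)$. No gap to report.
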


\begin{proof}
	\begin{align*}
		\sum_{y \in M} Q(y|i)  = 1 - \sum_{y \in M^c} Q(y|i) \le 1 - e^{-\eps} \sum_{y \in M^c} Q(y|1) =  1 - e^{-\eps} \Paren{1- \sum_{y \in M} Q(y|1)}.
	\end{align*}
\end{proof}

Next, we partition output set $\cY$ into $k$ subsets, where $\forall t \in [k]$, 
\[
M_t = \{y \in \cY | \arg \max_{i \in [k]}  Q(y|i) = t\}.
\]

Then combining~\eqref{eqn:three} and Lemma~\ref{lem:subset}, we have:
\begin{align*}
	& \sum_{y\in\cY} \sum_{i=1}^{k} \frac{(Q(y|i)-Q(y|1))^2}{\sum_{1\le i'\le \ab}Q(y|i')}  \le  (e^{\eps}-1)  +  \sum_{t \in[k]} \sum_{y \in M_t} (Q(y|t) -  Q(y|1)) \\
	\le &  (e^{\eps}-1)  + \sum_{t \in[k]} \Paren{  1 - e^{-\eps} \Paren{1- \sum_{y \in M_t} Q(y|1)}  - \sum_{y \in M_t}  Q(y|1) } \\
	= & (e^{\eps}-1)  + \sum_{t \in[k]} ( 1 - e^{-\eps}) (1 - \sum_{y \in M_t}  Q(y|1)) = O(k\eps). 
\end{align*}

\subsection{Proof of~\eqref{eqn:bound_chi_bs}} \label{sec:bound_chi_bs}

Define $Q(y|j,i) = \probof{Y = y | X = (j,i)}$, we have:
\begin{align} 
	\chi^2(Q | \cP_\cZ)  & = \sum_{y \in \cY} \frac{\sum_{j = 1}^m \frac{16k_j^2 \alpha^2 }{\Paren{\sum_{t = 1}^m k_t^2}^2}  \sum_{i \in [k_j/2]} \Paren{Q(y|j, 2i)-Q(y|j, 2i-1)}^2  }
	{ \frac{1}{k} \sum_{j = 1}^{m} \sum_{i \in [k_j]} Q(y|j,i) }  \nonumber \\
	& \le  \frac{k\alpha^2}{\sum_{t = 1}^{m} k_t^2} 
	\sum_{y \in \cY} \frac{\sum_{j = 1}^m \frac{16k_j^2 }{\sum_{t = 1}^m k_t^2}  \sum_{i \in [k_j/2]} \Paren{Q(y|j, 2i)-Q(y|j, 2i-1)}^2  }
	{ \sum_{j = 1}^{m} \sum_{i \in [k_j]} Q(y|j,i) }.  \label{eqn:chi_square1}
\end{align}
Within each block $\cX_j$, the elements satisfy classic $\eps$-LDP. It is proved in~\cite{pmlr-v99-acharya19a} that $\forall j \in [m], i \in [k_j]$,
\[
\Paren{Q(y|j, 2i)-Q(y|j, 2i-1)}^2 \le \frac{(e^\eps - 1)^2 }{k_j^2} \Paren{\sum_{i \in [k_j]} Q(y|j,i)}^2.
\]
Hence we have
\begin{align}
	\chi^2(Q | \cP_\cZ) & \le \frac{k\alpha^2 (e^\eps - 1)^2 }{\sum_{t = 1}^{m} k_t^2} 
	\sum_{y \in \cY} \frac{\sum_{j = 1}^m \frac{8k_j}{\sum_{t = 1}^m k_t^2}  \Paren{\sum_{i \in [k_j]} Q(y|j,i)}^2  }
	{ \sum_{j = 1}^{m} \sum_{i \in [k_j]} Q(y|j,i)}  \nonumber \\
	& \le \frac{k\alpha^2 (e^\eps - 1)^2 }{\sum_{t = 1}^{m} k_t^2} \sum_{y \in \cY} \sum_{j = 1}^m \frac{8k_j}{\sum_{t = 1}^m k_t^2}  \Paren{\sum_{i \in [k_j]} Q(y|j,i)} \nonumber \\
	& = \frac{k\alpha^2 (e^\eps - 1)^2 }{\sum_{t = 1}^{m} k_t^2} \sum_{j = 1}^m \frac{8k_j}{\sum_{t = 1}^m k_t^2} \sum_{y \in \cY}  \Paren{\sum_{i \in [k_j]} Q(y|j,i)} \nonumber \\
	& \le \frac{k\alpha^2 (e^\eps - 1)^2 }{\sum_{t = 1}^{m} k_t^2} \sum_{j = 1}^m \frac{8k_j}{\sum_{t = 1}^m k_t^2} \times k_j\nonumber \\
	& \le \frac{8k\alpha^2 (e^\eps - 1)^2 }{\sum_{t = 1}^{m} k_t^2} \nonumber \\
	& = O \Paren{\frac{k \alpha^2 \eps^2 }{\sum_{t = 1}^m k_t^2}}.
\end{align}

\end{document}